\documentclass{article}

% ready for submission
\usepackage[nonatbib, final]{neurips_2023}

\usepackage{amsmath}
\usepackage[utf8]{inputenc} % allow utf-8 input
\usepackage[T1]{fontenc}    % use 8-bit T1 fonts
\usepackage{hyperref}       % hyperlinks
\usepackage{url}            % simple URL typesetting
\usepackage{booktabs}       % professional-quality tables
\usepackage{amsfonts}       % blackboard math symbols
\usepackage{nicefrac}       % compact symbols for 1/2, etc.
\usepackage{microtype}      % microtypography
\usepackage{xcolor}         % colors
\usepackage{graphicx}
\usepackage{subfigure}
\usepackage{bbm}
\usepackage{bm}
\usepackage{amsthm}
\usepackage[capitalise]{cleveref}
\usepackage{wrapfig}
\usepackage{thm-restate}
\usepackage{mathtools}
\newtheorem{prop}{Proposition}
\newtheorem{defi}{Definition}

\crefname{section}{Sec.}{Sections}
\newcommand{\squish}[1]{{#1\parfillskip=0pt\par}}

% tikz stuff
\usepackage{pgfplots}
\usepackage{tikz}
\usetikzlibrary{positioning,arrows.meta}
\pgfplotsset{compat=1.18}
\usepgfplotslibrary{colormaps,fillbetween}
\usepackage{tikz-3dplot}
\usepackage{xhfill}
\definecolor{aurometalsaurus}{rgb}{0.43, 0.5, 0.5}
\definecolor{ashgrey}{rgb}{0.7, 0.75, 0.71}
% \definecolor{sharedred}{rgb}{1.0, 0.15, 0.07}
\definecolor{sharedred}{rgb}{0.93, 0.11, 0.14}
\definecolor{sharedblue}{rgb}{0.01, 0.35, 0.93}
\definecolor{cbgreen}{rgb}{0.30196078431372547, 0.6862745098039216, 0.2901960784313726}
\definecolor{cborange}{rgb}{1.0, 0.4980392156862745, 0.0}
\definecolor{lineblue}{rgb}{0.23192618223760095, 0.5456516724336793, 0.7626143790849673}
\DeclareRobustCommand\sampleline[1]{\tikz\draw[#1] (0,0) (0,\the\dimexpr\fontdimen22\textfont2\relax) -- (1.25em,\the\dimexpr\fontdimen22\textfont2\relax);}%

\pgfkeys{
  /pgf/arrow keys/.cd,
  pitch/.code={%
    \pgfmathsetmacro\pgfarrowpitch{#1}
    \pgfmathsetmacro\pgfarrowsinpitch{abs(sin(\pgfarrowpitch))}
    \pgfmathsetmacro\pgfarrowcospitch{abs(cos(\pgfarrowpitch))}
  },
}

\pgfdeclarearrow{
  name = Cone,
  defaults = {       % inherit from Kite
    length     = +3.6pt +5.4,
    width'     = +0pt +0.5,
    line width = +0pt 1 1,
    pitch      = +0, % lie on screen
  },
  cache = false,     % no need cache
  setup code = {},   % so no need setup
  drawing code = {   % but still need math
    % draw the base
    \pgfmathsetmacro\pgfarrowhalfwidth{.5\pgfarrowwidth}
    \pgfmathsetmacro\pgfarrowhalfwidthsin{\pgfarrowhalfwidth*\pgfarrowsinpitch}
    \pgfpathellipse{\pgfpointorigin}{\pgfqpoint{\pgfarrowhalfwidthsin pt}{0pt}}{\pgfqpoint{0pt}{\pgfarrowhalfwidth pt}}
    \pgfusepath{fill}
    % test if the cone part visible
    \pgfmathsetmacro\pgfarrowlengthcos{\pgfarrowlength*\pgfarrowcospitch}
    \pgfmathparse{\pgfarrowlengthcos>\pgfarrowhalfwidthsin}
    \ifnum\pgfmathresult=1
      % it is visible, so draw
      \pgfmathsetmacro\pgfarrowlengthtemp{\pgfarrowhalfwidthsin*\pgfarrowhalfwidthsin/\pgfarrowlengthcos}
      \pgfmathsetmacro\pgfarrowwidthtemp{\pgfarrowhalfwidth/\pgfarrowlengthcos*sqrt(\pgfarrowlengthcos*\pgfarrowlengthcos-\pgfarrowhalfwidthsin*\pgfarrowhalfwidthsin)}
      \pgfpathmoveto{\pgfqpoint{\pgfarrowlengthcos pt}{0pt}}
      \pgfpathlineto{\pgfqpoint{\pgfarrowlengthtemp pt}{ \pgfarrowwidthtemp pt}}
      \pgfpathlineto{\pgfqpoint{\pgfarrowlengthtemp pt}{-\pgfarrowwidthtemp pt}}
      \pgfpathclose
      \pgfusepath{fill}
    \fi
    \pgfpathmoveto{\pgfpointorigin}
  }
}

\let\oldmaketitle\maketitle
\renewcommand{\maketitle}{\oldmaketitle\setcounter{footnote}{0}}

% from appendix
\DeclareMathOperator*{\argmax}{arg\,max}
\usepackage[shortlabels]{enumitem}

\title{A U-turn on Double Descent: Rethinking Parameter Counting in Statistical Learning}

% The \author macro works with any number of authors. There are two commands
% used to separate the names and addresses of multiple authors: \And and \AND.
%
% Using \And between authors leaves it to LaTeX to determine where to break the
% lines. Using \AND forces a line break at that point. So, if LaTeX puts 3 of 4
% authors names on the first line, and the last on the second line, try using
% \AND instead of \And before the third author name.

\author{%
  Alicia Curth\thanks{Equal contribution} \\
  University of Cambridge\\
  \texttt{amc253@cam.ac.uk} \\
  % examples of more authors
   \And
     Alan Jeffares$^*$ \\
  University of Cambridge\\
  \texttt{aj659@cam.ac.uk} \\
  \And
    Mihaela van der Schaar \\
  University of Cambridge\\
  \texttt{mv472@cam.ac.uk} \\
  % Coauthor \\
  % Affiliation \\
  % Address \\
  % \texttt{email} \\
  % \AND
  % Coauthor \\
  % Affiliation \\
  % Address \\
  % \texttt{email} \\
  % \And
  % Coauthor \\
  % Affiliation \\
  % Address \\
  % \texttt{email} \\
  % \And
  % Coauthor \\
  % Affiliation \\
  % Address \\
  % \texttt{email} \\
}

\begin{document}

\maketitle

\begin{abstract}
    Conventional statistical wisdom established a well-understood relationship between model complexity and prediction error, typically presented as a \textit{U-shaped curve} reflecting a transition between under- and overfitting regimes.     However, motivated by the success of overparametrized neural networks, recent influential work has suggested this theory to be generally incomplete, introducing an additional regime that exhibits a second descent in test error as the parameter count $p$ grows past sample size $n$  -- a  phenomenon dubbed  \textit{double descent}. While most attention has naturally been given to the deep-learning setting, double descent was shown to emerge more generally across non-neural models: known cases include \textit{linear regression, trees, and boosting}. In this work, we take a closer look at the evidence surrounding these more classical statistical machine learning methods and challenge the claim that observed cases of  double descent truly extend the limits of a traditional U-shaped complexity-generalization curve therein. We show that once careful consideration is given to \textit{what is being plotted} on the x-axes of their double descent plots, it becomes apparent that there are implicitly multiple, distinct complexity axes along which the parameter count grows. We demonstrate that the second descent appears exactly (and \textit{only}) when and where the transition between these underlying axes occurs, and that its location is thus \textit{not} inherently tied to the interpolation threshold $p\!=\!n$. We then gain further insight by adopting a classical nonparametric statistics perspective. We interpret the investigated methods as \textit{smoothers} and propose a generalized measure for the \textit{effective} number of parameters they use \textit{on unseen examples}, using which we find that their apparent double descent curves do indeed fold back into more traditional convex shapes -- providing a resolution to the ostensible tension between double descent and traditional statistical intuition.
\end{abstract}

\section{Introduction}
Historically, throughout the statistical learning literature, the relationship between model complexity and prediction error has been well-understood as a careful balancing act between \textit{underfitting}, associated with models of high bias, and \textit{overfitting}, associated with high model variability.  This implied tradeoff, with optimal performance achieved between extremes, gives rise to a U-shaped curve, illustrated in the left panel of \cref{fig:page1fig}. It has been a fundamental tenet of learning from data, omnipresent in introductions to statistical learning \cite{hastie1990GAM, vapnik1999nature, hastie2009elements}, and is also practically reflected in numerous classical model selection criteria that explicitly trade off training error with model complexity \cite{mallows2000some, akaike1974new, schwarz1978estimating}. Importantly, much of the intuition relating to this U-shaped curve was originally developed in the context of the earlier statistics literature (see e.g. the historical note in \cite{neal2019bias}), which focussed on conceptually simple learning methods such as linear regression, splines or nearest neighbor methods \cite{wahba1975completely, hastie1990GAM, geman1992neural} and their expected \textit{in-sample} prediction error, which fixes inputs and resamples noisy outcomes \cite{mallows2000some, hastie1990GAM, rosset2019fixed}.

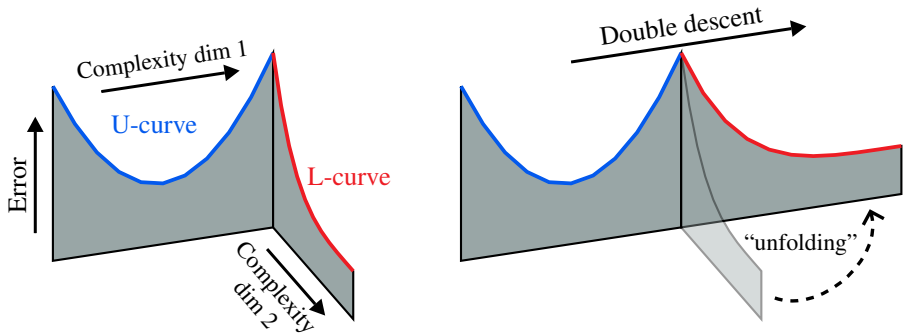
\begin{figure}[!t]
\vspace{-1cm}
\centering
\begin{tikzpicture}
\begin{axis}[hide axis, view={0}{130},  % 120
xmin=0, xmax=1.1,
zmin=0, zmax=1.,
ymin=0, ymax=1.15,
clip=false,
rotate around z=0,
rotate around y=20,
rotate around x=0,
xlabel={$x$},
ylabel={$y$},
zlabel={$z$}]
% order of layers matters!! 
\addplot3[black,
thick,
no markers,
fill=aurometalsaurus!70,
opacity=1.] table[x={num_params}, y={test_errors} , z={z}, col sep=space] {data/synth/p1_surface.txt};  % add DD surface
\addplot3[black,
thick,
no markers, 
fill=aurometalsaurus!70, 
opacity=1] table[x={num_params}, y={test_errors} , z={z}, col sep=space] {data/synth/p2_folded.txt};  % add p2 folded surface
\addplot3[sharedblue,
line width = 0.5mm,
no markers,
opacity=1] table[x={num_params}, y={test_errors} , z={z}, col sep=space] {data/synth/p1_roof.txt};  % add DD surface
\node[sharedblue] at (0.23, 0.6, 0.) {U-curve};
\addplot3[sharedred,
line width = 0.5mm,
no markers,
opacity=1] table[x={num_params}, y={test_errors} , z={z}, col sep=space] {data/synth/p2_folded_roof.txt};  % add DD surface
\node[sharedred] at (0.6, 0.4, 0.2) {L-curve};
% arrow style defined in preamble
\draw[-{Cone[width'=+0pt +1.1]}](-0.04, 0.15, 0) -- (-0.04, 0.65, 0);
\draw[-, line width=0.4mm](-0.04, 0.15, 0) -- (-0.04, 0.65, 0);
\node[rotate=90] (Yaxis) at (-0.08, 0.4, 0) {Error};
\draw[-{Cone[width'=+0pt +1.1]}]((0.11, 0.8, 0.0) -- ((0.39, 0.8, 0.0);
\draw[-, line width=0.4mm]((0.11, 0.8, 0.0) -- ((0.39, 0.8, 0.0);
\node[rotate=8] (C1) at (0.25, 0.9, 0.) {\footnotesize Complexity dim 1};
\draw[-{Cone[width'=+0pt +1.1]}]((0.45, 0., 0.1) -- ((0.45, 0., 0.4);
\draw[-, line width=0.4mm]((0.45, 0., 0.1) -- ((0.45, 0., 0.4);
\node[rotate=310] (C1) at (0.36, 0., 0.33) {\footnotesize \begin{tabular}{c} Complexity\\dim 2 \end{tabular}};
\end{axis}
\end{tikzpicture}
\qquad
\begin{tikzpicture}
\tdplotsetmaincoords{0}{0}
\begin{axis}[hide axis, view={0}{130},  % 120
xmin=0, xmax=1.1,
zmin=0, zmax=1.,
ymin=0, ymax=1.15,
clip=false,
rotate around z=0,
rotate around y=20,
rotate around x=0,
xlabel={$x$},
ylabel={$y$},
zlabel={$z$}]
% order of layers matters!! 
\addplot3[black,
thick,
no markers,
fill=aurometalsaurus!70,
opacity=1] table[x={num_params}, y={test_errors} , z={z}, col sep=space] {data/synth/p1_surface.txt};  % add DD surface
\addplot3[black,
thick,
no markers,
fill=aurometalsaurus!70,
opacity=1] table[x={num_params}, y={test_errors} , z={z}, col sep=space] {data/synth/p2_surface.txt};  % add DD surface
\addplot3[black,
thick,
no markers, 
fill=aurometalsaurus!70, 
opacity=0.4] table[x={num_params}, y={test_errors} , z={z}, col sep=space] {data/synth/p2_folded.txt};  % add p2 folded surface
\addplot3[sharedblue,
line width = 0.5mm,
no markers,
opacity=1] table[x={num_params}, y={test_errors} , z={z}, col sep=space] {data/synth/p1_roof.txt};  % add DD surface
% \node[blue] at (0.23, 0.6, 0.) {U-curve};
\addplot3[sharedred,
line width = 0.5mm,
no markers,
opacity=1] table[x={num_params}, y={test_errors} , z={z}, col sep=space] {data/synth/p2_roof.txt};  % add DD surface
\node[rotate=8] (Yaxis) at (0.5, 1, 0) {Double descent};
\draw[-{Cone[width'=+0pt +1.1]}](0.25, 0.9, 0) -- (0.75, 0.9, 0);
\draw[-, line width=0.4mm](0.25, 0.9, 0) -- (0.75, 0.9, 0);
\draw [Straight Barb-, dashed, line width=0.45mm] (0.91, 0, 0.05) to[bend left=65] (0.56, 0, 0.4);
\node at (0.71, 0, 0.17) {\footnotesize``unfolding''};
\node[rotate=310, text opacity=0] (C1) at (0.36, 0., 0.33) {\footnotesize \begin{tabular}{c} Complexity\\dim 2 \end{tabular}};
\end{axis}
\end{tikzpicture}
\vspace{-0.75cm}
\caption{\textbf{A 3D generalization plot with two complexity axes unfolding into double descent.} \small A generalization plot with two complexity axes, each exhibiting a convex curve (left). By increasing raw parameters along different axes sequentially, a double descent effect appears to emerge along their composite axis (right).}
\label{fig:page1fig}
\vspace{-0.65cm}
\end{figure}

{The modern machine learning (ML) literature, conversely, focuses on far more flexible methods with relatively huge parameter counts and considers their generalization to \textit{unseen} inputs \cite{goodfellow2016deep, murphy2022probabilistic}. A similar U-shaped curve was long accepted to also govern the complexity-generalization relationship of such methods \cite{geman1992neural, vapnik1999nature, hastie2009elements}-- until highly overparametrized models, e.g. neural networks, were recently found to achieve near-zero training error \textit{and} excellent test set performance \cite{neyshabur2014search, bartlett2020benign, belkin2021fit}. In this light, the seminal paper of Belkin et al. (2019) \cite{belkin2019reconciling} sparked a new line of research by arguing for a need to extend on the apparent limitations of classic understanding to account for a \textit{double descent} in prediction performance as the total number of model parameters (and thus -- presumably -- model complexity) grows. This is illustrated in the right panel of \cref{fig:page1fig}. Intuitively, it is argued that while the traditional U-curve is appropriate for the regime in which the number of total model parameters $p$ is smaller than the number of instances $n$, it no longer holds in the modern, zero train-error, \textit{interpolation regime} where $p\! >\! n$ -- here, test error experiences a second descent. Further, it was demonstrated that this modern double descent view of model complexity applies not only in deep learning where it was first observed  \cite{bos1996dynamics,  neal2018modern, spigler2018jamming, advani2020high}, but also ubiquitously appears across many non-deep learning methods such as trees, boosting and even linear regression \cite{belkin2019reconciling}.}

\squish{\textbf{Contributions.} In this work, we investigate whether the double descent behavior observed in recent empirical studies of such \textit{non-deep} ML methods \textit{truly} disagrees with the traditional notion of a U-shaped tradeoff between model complexity and prediction error. In two parts, we argue that once careful consideration is given to \textit{what is being plotted} on the
axes of these double descent plots, the originally counter-intuitive peaking behavior can be comprehensively explained under existing paradigms: }

\squish{\textbf{\textbullet{} Part 1: Revisiting existing experimental evidence. } We show that in the experimental evidence for non-deep double descent -- using trees, boosting, and linear regressions -- there is implicitly \textit{more than one complexity axis} along which the parameter count grows. Conceptually,  as illustrated in \cref{fig:page1fig}, we demonstrate that this empirical evidence for double descent can thus be comprehensively explained as a consequence of an implicit \textit{unfolding} of a 3D plot with two orthogonal complexity axes (that both individually display a classical convex curve) into a single 2D-curve. We also highlight that the location of the second descent is thus \textit{not} inherently tied to the interpolation threshold. While this is straightforward to show for the tree- and boosting examples (\cref{sec:belkinsec}), deconstructing the underlying axes in the linear regression example is non-trivial (and involves understanding the connections between min-norm solutions and \textit{unsupervised dimensionality reduction}). Our analysis in this case (\cref{sec:RFFregression}) could thus be of independent interest as a simple new interpretation of double descent in linear regression. }

\squish{\textbf{\textbullet{} Part 2: Rethinking parameter counting through a classical statistics lens. } We then note that all methods considered in Part 1 can be interpreted as \textit{smoothers} (\cref{sec:complexity}), which are usually compared in terms of a measure of the \textit{effective} (instead of raw) number of parameters they use when issuing predictions \cite{hastie1990GAM}. As existing measures were derived with \textit{in-sample} prediction in mind, we propose a generalized effective parameter measure $p^0_{\hat{\mathbf{s}}}$  that allows to consider arbitrary sets of inputs $\mathcal{I}_0$. Using $p^0_{\hat{\mathbf{s}}}$ to measure complexity, we then indeed discover that the apparent double descent curves fold back into more traditional U-shapes -- because  $p^0_{\hat{\mathbf{s}}}$ is \textit{not actually increasing} in the interpolation regime. Further, we find that, in the interpolation regime, trained models tend to use a different number of effective parameters when issuing predictions on unseen test inputs than on previously observed training inputs. We also note that, while such interpolating models can generalize well to unseen inputs, overparametrization \textit{cannot} improve their performance in terms of the in-sample prediction error originally of interest in statistics -- providing a new reason for the historical absence of double descent curves. Finally, we discuss practical implications for e.g. model comparison.}

\begin{figure}
\vspace{-0.25cm}
    \centering
\includegraphics[width=.95\textwidth]{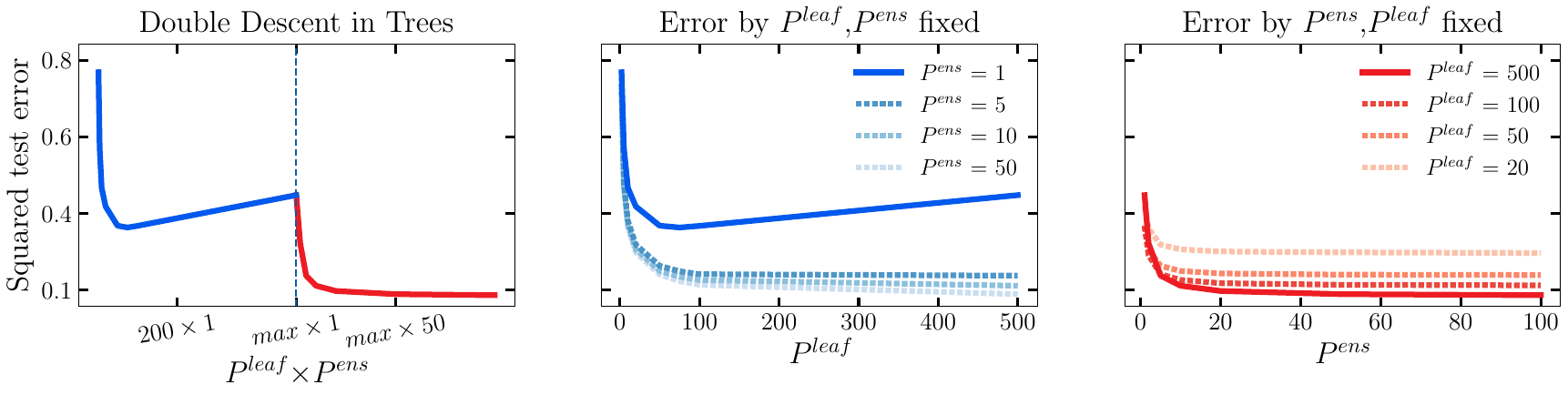}
\vspace{-0.35cm}
    \caption{\textbf{Decomposing double descent for trees.} \small Reproducing \cite{belkin2019reconciling}'s tree experiment (left). Test error by $P^{leaf}$ for fixed $P^{ens}$ (center).  Test error by $P^{ens}$ for fixed $P^{leaf}$ (right).}
    \label{fig:trees}
    \vspace{-0.25cm}
\end{figure}

\section*{Part 1: Revisiting the evidence for double descent in non-deep ML models}
\squish{\textbf{Experimental setup.} We center our study
around the non-neural experiments in \cite{belkin2019reconciling} as it is \textit{the} seminal paper on double descent and provides the broadest account of non-deep learning methods that exhibit double descent. Through multiple empirical studies, \cite{belkin2019reconciling} demonstrate that double descent arises in trees, boosting and linear regression. Below, we \textit{re-analyze} these experiments and highlight that in each study, as we transition from the classical U-shaped regime into the subsequent second descent regime, \textit{something else} implicitly changes in the model or training definition, fundamentally changing the class of models under consideration \textit{exactly at the transition threshold} between the observed regimes -- which is precisely the cause of the second descent phenomenon.  In \cref{sec:belkinsec}, we begin by investigating the tree and boosting experiments, where this is straightforward to show. In \cref{sec:RFFregression}, we then investigate the linear regression example, where decomposing the underlying mechanisms is non-trivial.  Throughout, we closely follow \cite{belkin2019reconciling}'s experimental setup: they use standard benchmark datasets and train all ML methods by minimizing the \textit{squared} loss\footnote{In multi-class settings, \cite{belkin2019reconciling} use a one-vs-rest strategy and report squared loss summed across classes. Their use of the squared loss is supported by recent work on squared loss for classification \cite{hui2020evaluation, muthukumar2021classification}, and practically implies the use of standard \textit{regression} implementations of the considered ML methods.
}. Similarly to their work, we focus on results using MNIST (with $n_{train}=10000$) in the main text. We  present additional results, including other datasets, and further discussion of the experimental setup in \cref{app:results}.}

\section{Warm-up: Observations of double descent in trees and boosting}\label{sec:belkinsec}
\subsection{Understanding double descent in trees}\label{sec:decisiontree}
\squish{In the left panel of \cref{fig:trees}, we replicate the experiment in \cite{belkin2019reconciling}'s Fig. 4, demonstrating double descent in trees. In their experiment, the number of model parameters is initially controlled through the maximum allowed number of terminal leaf nodes $P^{leaf}$. However, $P^{leaf}$ for a single tree cannot be increased past $n$ (which is when every leaf contains only one instance), and often $\max(P^{leaf})\!<\!n$ whenever larger leaves are already pure.  
Therefore, when $P^{leaf}$ reaches its maximum, in order to further increase the raw number of parameters, it is necessary to change \textit{how} further parameters are added to the model. \cite{belkin2019reconciling} thus transition to showing how test error evolves as one averages over an increasing number $P^{ens}$ of different trees grown to full depth, where each tree will generally be distinct due to the randomness in features considered for each split. As one switches between plotting increasing $P^{leaf}$ and $P^{ens}$ on the x-axis, one is thus conceptually no longer increasing the number of parameters \textit{within the same model class}: in fact, when $P^{ens}\! > \!1$ one is {no longer} actually considering a tree, but instead \textit{an ensemble} of trees (i.e. a random forest \cite{breiman2001random} without bootstrapping).}  

 \begin{wrapfigure}[10]{r}{0.3\textwidth}
 \vspace{-0.5cm}
    \centering
\includegraphics[width=0.28\textwidth]{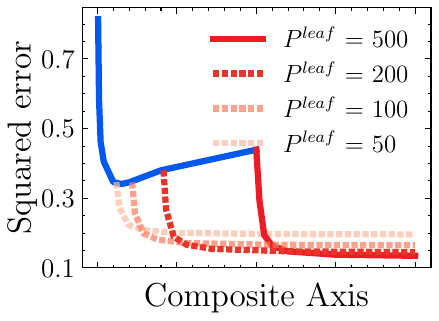}
\vspace{-0.3cm}
    \caption{\textbf{Shifting the peak.} \small Transitioning from  $P^{leaf}$ to $P^{ens}$ at different values of $P^{leaf}$.}
    \label{fig:peak-movingrf}
     \vspace{-0.5cm}
\end{wrapfigure}In \cref{fig:trees}, we illustrate this empirically: in the center plot we show that, on the one hand, for fixed  $P^{ens}$, error exhibits a classical convex U- (or L-)shape in tree-depth $P^{leaf}$.  On the other hand, for fixed $P^{leaf}$ in the right plot, error also exhibits an L-shape in the number of trees $P^{ens}$, i.e. a convex shape without any ascent -- which is in line with the known empirical observation that adding trees to a random forest generally does not hurt \cite[Ch. 15.3.4]{hastie2009elements}. Thus, only by transitioning from increasing $P^{leaf}$ (with $P^{ens}\!=\!1$)  to increasing $P^{ens}$ (with $P^{leaf}\!=\!n$) -- i.e. by connecting the two solid curves across the middle and the right plot -- do we obtain the double descent curve in the left plot of \cref{fig:trees}.  In \cref{fig:peak-movingrf}, we show that we could therefore \textit{arbitrarily move or even remove} the first peak by changing \textit{when} we switch from increasing parameters through $P^{leaf}$ to $P^{ens}$. Finally, we note that the interpolation threshold $p=n$ plays a special role only on the $P^{leaf}$ axis where it determines maximal depth, while parameters on the $P^{ens}$ axis can be increased indefinitely -- further suggesting that parameter counts alone are not always meaningful\footnote{A related point is raised in \cite{buschjager2021there}, who notice that a double descent phenomenon in random forests appears in \cite{belkin2019reconciling} only because the \textit{total} number of leaves is placed on the x-axis, while the true complexity of a forest is actually better characterized by the \textit{average} number of leaves in its trees according to results in learning theory. However, this complexity measure clearly does not change once $P^{leaf}$ is fixed in the original experiment.}.

\subsection{Understanding double descent in gradient boosting}\label{sec:boosting}
Another experiment is considered in Appendix S5 of \cite{belkin2019reconciling} seeking to provide evidence for the emergence of double descent in gradient boosting. Recall that in gradient boosting, new base-learners (trees) are trained \textit{sequentially}, accounting for current residuals by performing multiple \textit{boosting rounds} which improve upon predictions of previous trees. In their experiments, \cite{belkin2019reconciling} use trees with 10 leaves as base learners and a high learning rate of $\gamma\!=\!0.85$ to encourage quick interpolation. The raw number of parameters is controlled by first increasing the number of boosting rounds $P^{boost}$ until the squared training error reaches approximately zero, after which  $P^{boost}$ is fixed and ensembling of $P^{ens}$ independent models  is used to further increase the raw parameter count. 

In \cref{fig:gb}, we first replicate the original experiment and then again provide experiments varying each of  $P^{boost}$ and $P^{ens}$ separately. Our findings parallel those above for trees: for a fixed number of ensemble members $P^{ens}$, test error has a U- or L-shape in the number of boosting rounds  $P^{boost}$ and an L-shape in $P^{ens}$ for fixed boosting rounds $P^{boost}$.  As a consequence, a double descent shape occurs \textit{only} when and where we switch from one method of increasing complexity to another.

\begin{figure}
\vspace{-0.25cm}
    \centering
\includegraphics[width=.95\textwidth]{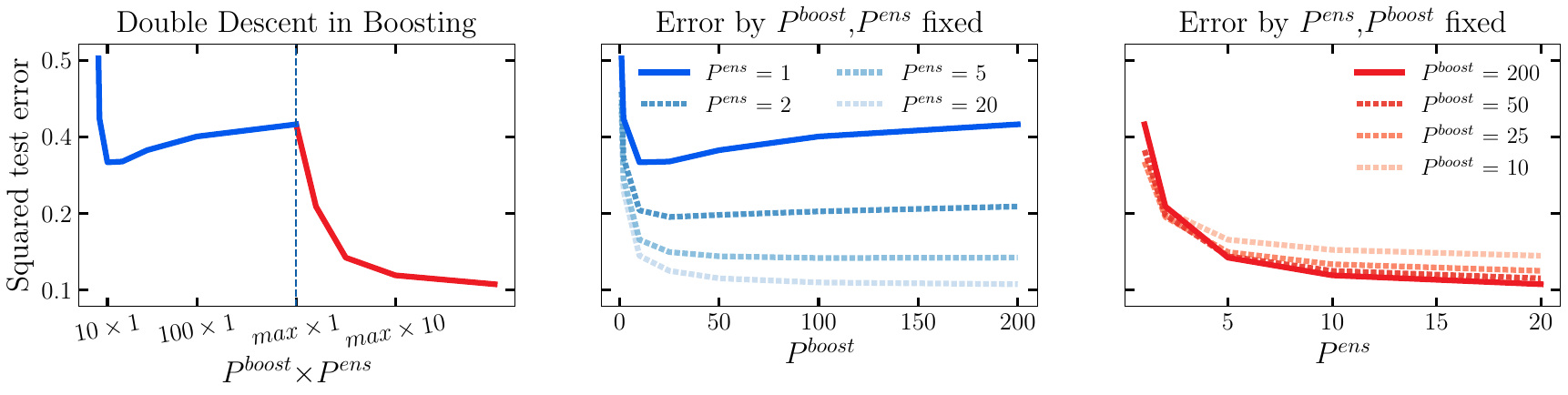}
\vspace{-0.35cm}
    \caption{\textbf{Decomposing double descent for gradient boosting.} \small Reproducing \cite{belkin2019reconciling}'s boosting experiment (left). Test error by $P^{boost}$ for fixed $P^{ens}$ (center).  Test error by $P^{ens}$ for fixed $P^{boost}$ (right).}
    \label{fig:gb}
    \vspace{-0.5cm}
\end{figure}

\section{Deep dive: Understanding double descent in linear regression} \label{sec:RFFregression}
We are now ready to consider Fig. 2 of \cite{belkin2019reconciling}, which provides experiments demonstrating double descent in the case of linear regression. Recall that linear regression with $\mathbf{y} \in \mathbb{R}^{n}$ and $\mathbf{X} \in \mathbb{R}^{n \times d}$ estimates the coefficients in a model $\mathbf{y}=\mathbf{X} \bm{\beta}$, thus the number of raw model parameters equals the number of input dimensions (i.e. the dimension $d$ of the regression coefficient vector $\bm{\beta}$) by design. Therefore, in order to flexibly control the number of model parameters, \cite{belkin2019reconciling} apply basis expansions using random Fourier features (RFF). Specifically, given input $\textstyle \mathbf{x} \in \mathbb{R}^d$, the number of raw model parameters $P^{\phi}$ is controlled by randomly generating features \smash{$\textstyle{\phi_p(\mathbf{x}) = \text{Re}(\exp^{\sqrt{-1}\mathbf{v}_p^T \mathbf{x}})}$} for all $p \leq P^{\phi}$, where each \smash{$\textstyle \mathbf{v}_p \overset{\mathrm{iid}}{\sim} \mathcal{N}(\mathbf{0}, \frac{1}{5^2} \cdot \mathbf{I}_d)$}. For any given number of features $P^{\phi}$, these are stacked to give a $n \times P^{\phi}$ dimensional random design matrix $\mathbf{\Phi}$, which is then used to solve the regression problem \smash{$\mathbf{y}=\mathbf{\Phi} \bm{\beta}$} by least squares. For $P^{\phi} \leq n$, this has a unique solution (\smash{$\textstyle \hat{\bm{\beta}}= (\mathbf{\Phi}^T\mathbf{\Phi})^{-1}\mathbf{\Phi}^T \mathbf{y}$}) while for $P^{\phi} > n$ the problem becomes underdetermined (i.e. there are infinite solutions) which is why \cite{belkin2019reconciling} rely on a specific choice: the min-norm solution (\smash{$\textstyle \hat{\bm{\beta}}= \mathbf{\Phi}^T (\mathbf{\Phi}\mathbf{\Phi}^T)^{-1}\mathbf{y}$}).

{Unlike the experiments discussed in \cref{sec:belkinsec}, there appears to be only one obvious mechanism for increasing raw parameters in this case study. Instead, as we show in \cref{sec:pca}, the change in the used solution at $P^{\phi}\!=\!n$ turns out to be the crucial factor here: we find that the min-norm solution leads to implicit \textit{unsupervised dimensionality reduction},  resulting in two distinct mechanisms for increasing the total parameter count in linear regression. Then, we again demonstrate empirically in \cref{sec:rffempirical} that each individual mechanism is indeed associated with a standard generalization curve, such that the combined generalization curve exhibits double descent only because they are applied in succession.}

\subsection{Understanding the connections between min-norm solutions and dimensionality reduction }\label{sec:pca}

\squish{In this section, we show that, while the min-norm solution finds coefficients $\bm{\hat{\beta}}$ of \textit{raw} dimension $P^{\phi}$, only $n$ of its dimensions are well-determined -- i.e. the true parameter count is not actually increasing in $P^{\phi}$ once $P^{\phi}>n$. Conceptually, this is because min-norm solutions project $\mathbf{y}$ onto the row-space of $\mathbf{\Phi}$, which is $n-$dimensional as $rank(\mathbf{\Phi})=\min(P^{\phi}, n)$ (when $\mathbf{\Phi}$ has full rank). To make the consequence of this more explicit, we can show that the min-norm solution (which has $P^{\phi}$ raw parameters) can always be represented by using a $n$-dimensional coefficient vector applied to a $n-$dimensional basis of $\mathbf{\Phi}$. In fact, as we formalize in \cref{prop:minnormpca}, this becomes most salient  when noting that applying the min-norm solution to $\mathbf{\Phi}$ is \textit{exactly} equivalent to a learning algorithm that  (i) first constructs a $n-$dimensional basis $\mathbf{B}_{SVD}$ from the $n$ right singular vectors of the input matrix computed using the singular value decomposition (SVD) in an \textit{unsupervised pre-processing step} and (ii) then applies standard (fully determined) least squares using the discovered $n-$dimensional basis\footnote{This application of the fundamental connection between min-norm solutions and the singular value decomposition (see e.g. \cite[Ch. 5.7]{golub2013matrix}) reinforces previous works which have noted related links between the min-norm solution and dimensionality reduction \cite{raudys1998expected, krijthe2016peaking}.}.}

\begin{restatable}{prop}{minnormpca}[Min-norm least squares as dimensionality reduction.]\label{prop:minnormpca}
For a full rank matrix $\mathbf{X} \in \mathbb{R}^{n \times d}$ with $n < d$ and a vector of targets $\mathbf{y} \in \mathbb{R}^n$, the min-norm least squares solution $\hat{\bm{\beta}}^\text{MN} = \{ \min_{\bm{\beta}} ||\bm{\beta}||^2_2$: $\mathbf{X}\bm{\beta} = \mathbf{y}\}$ and the least squares solution $\hat{\bm{\beta}}^\text{SVD} =\{ \bm{\beta}$: $\mathbf{B}\bm{\beta} = \mathbf{y}\}$ 
using the matrix of basis vectors $\mathbf{B} \in \mathbb{R}^{n \times n}$, constructed using the first $n$ right singular vectors of X, are equivalent; i.e.  $\mathbf{x}^T \hat{\bm{\beta}}^\text{MN} = \mathbf{b}^T \hat{\bm{\beta}}^\text{SVD}$ for all $\mathbf{x} \in \mathbb{R}^d$ and corresponding basis representation $\mathbf{b} \equiv \mathbf{b}(\mathbf{x})$. 
\end{restatable}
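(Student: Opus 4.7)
The plan is to prove this by direct computation using the singular value decomposition (SVD) of $\mathbf{X}$, which is the natural tool given that the statement explicitly involves right singular vectors. Since $\mathbf{X}$ has full rank $n$, I would work with the compact SVD, writing $\mathbf{X} = \mathbf{U}\mathbf{\Sigma}\mathbf{V}_n^T$ where $\mathbf{U}\in\mathbb{R}^{n\times n}$ is orthogonal, $\mathbf{\Sigma}\in\mathbb{R}^{n\times n}$ is diagonal with strictly positive entries, and $\mathbf{V}_n\in\mathbb{R}^{d\times n}$ collects the first $n$ right singular vectors and satisfies $\mathbf{V}_n^T\mathbf{V}_n = \mathbf{I}_n$.

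The first step is to pin down the basis representation. By definition, $\mathbf{V}_n$ is the matrix of basis vectors, so the coordinate representation of a test point $\mathbf{x}\in\mathbb{R}^d$ is $\mathbf{b}(\mathbf{x}) = \mathbf{V}_n^T\mathbf{x}$, and the training design in the new basis is $\mathbf{B} = \mathbf{X}\mathbf{V}_n = \mathbf{U}\mathbf{\Sigma}\mathbf{V}_n^T\mathbf{V}_n = \mathbf{U}\mathbf{\Sigma}$. Since $\mathbf{U}$ and $\mathbf{\Sigma}$ are both invertible, so is $\mathbf{B}\in\mathbb{R}^{n\times n}$, and hence the system $\mathbf{B}\bm{\beta}=\mathbf{y}$ has the unique solution $\hat{\bm{\beta}}^\text{SVD} = \mathbf{B}^{-1}\mathbf{y} = \mathbf{\Sigma}^{-1}\mathbf{U}^T\mathbf{y}$. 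The prediction at $\mathbf{x}$ is therefore
\begin{equation*}
\mathbf{b}(\mathbf{x})^T\hat{\bm{\beta}}^\text{SVD} = \mathbf{x}^T\mathbf{V}_n\mathbf{\Sigma}^{-1}\mathbf{U}^T\mathbf{y}.
\end{equation*}

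The second step is to compute the min-norm solution in closed form using the same SVD. Substituting $\mathbf{X}^T = \mathbf{V}_n\mathbf{\Sigma}\mathbf{U}^T$ and $\mathbf{X}\mathbf{X}^T = \mathbf{U}\mathbf{\Sigma}^2\mathbf{U}^T$, whose inverse is $\mathbf{U}\mathbf{\Sigma}^{-2}\mathbf{U}^T$, I obtain
\begin{equation*}
\hat{\bm{\beta}}^\text{MN} = \mathbf{X}^T(\mathbf{X}\mathbf{X}^T)^{-1}\mathbf{y} = \mathbf{V}_n\mathbf{\Sigma}\mathbf{U}^T\mathbf{U}\mathbf{\Sigma}^{-2}\mathbf{U}^T\mathbf{y} = \mathbf{V}_n\mathbf{\Sigma}^{-1}\mathbf{U}^T\mathbf{y},
\end{equation*}
so that $\mathbf{x}^T\hat{\bm{\beta}}^\text{MN} = \mathbf{x}^T\mathbf{V}_n\mathbf{\Sigma}^{-1}\mathbf{U}^T\mathbf{y}$. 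Matching this against the expression derived above then gives $\mathbf{x}^T\hat{\bm{\beta}}^\text{MN} = \mathbf{b}(\mathbf{x})^T\hat{\bm{\beta}}^\text{SVD}$ for every $\mathbf{x}\in\mathbb{R}^d$, which is the desired equivalence.

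I do not anticipate a genuine obstacle: once the compact SVD is in hand and the basis representation $\mathbf{b}(\mathbf{x})=\mathbf{V}_n^T\mathbf{x}$ is fixed, the computation is a one-line cancellation of $\mathbf{U}^T\mathbf{U}=\mathbf{I}_n$. The only mildly subtle point worth being explicit about is why $\mathbf{B}$ is invertible (it inherits invertibility from $\mathbf{U}\mathbf{\Sigma}$ via the full-rank assumption on $\mathbf{X}$) and why the natural basis representation is $\mathbf{V}_n^T\mathbf{x}$ rather than $\mathbf{x}$ itself; stating this explicitly makes clear that the equivalence holds pointwise for every $\mathbf{x}$, not just on the training inputs.
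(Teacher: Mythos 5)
Your proposal is correct and follows essentially the same route as the paper's proof: both use the compact SVD $\mathbf{X}=\mathbf{U}\bm{\Sigma}\mathbf{V}^T$, identify $\mathbf{B}=\mathbf{X}\mathbf{V}=\mathbf{U}\bm{\Sigma}$ and $\mathbf{b}(\mathbf{x})=\mathbf{V}^T\mathbf{x}$, and cancel $\mathbf{U}^T\mathbf{U}=\mathbf{I}_n$ in the closed form of the min-norm solution. The only cosmetic difference is that you write $\hat{\bm{\beta}}^\text{SVD}=\mathbf{B}^{-1}\mathbf{y}$ directly while the paper keeps the form $\mathbf{B}^T(\mathbf{B}\mathbf{B}^T)^{-1}\mathbf{y}$, which coincides with $\mathbf{B}^{-1}\mathbf{y}$ since $\mathbf{B}$ is square and invertible.
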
\vspace{-0.55cm}
\begin{proof}
    Please refer to \cref{app:lr-proof}.
\end{proof}\vspace{-0.3cm}

\squish{Then what is \textit{really} causing the second descent if not an increasing number of fitted dimensions? While the addition of feature dimensions \textit{does} correspond to an increase in fitted model parameters while $P^\phi<n$, the performance gains in the $P^\phi > n$ regime are better explained as a linear model of fixed size $n$ being fit to an \textit{increasingly rich basis constructed in an unsupervised step}. To disentangle the two mechanisms further, we note that the procedure described above, when applied to a centered design matrix\footnote{Centering reduces the rank of the input matrix by 1 and thus requires appending an intercept to the PC design matrix. Without centering, the procedure is using the so-called \textit{uncentered} PCs \cite{cadima2009relationships} instead. We use the centered version with intercept in our experiments, but obtained identical results when using uncentered PCs.} is a special case of principal component (PC) regression \cite{jolliffe1982note}, where we select \textit{all} empirical principal components to form a complete basis of $\mathbf{\Phi}$. The more general approach would instead consist of selecting the top $P^{PC}$ principal components and fitting a linear model to that basis. Varying $P^{PC}$, the number of used principal components, is thus actually the first mechanism by which the raw parameter count can be altered; this controls the number of parameters being fit in the supervised step. The second, less obvious, mechanism is then the number of \textit{excess features} $\textstyle P^{ex}=P^\phi - P^{PC}$; this is the number of raw dimensions that only contribute to the creation of a richer basis, which is learned in an unsupervised manner\footnote{The \textit{implicit inductive bias} encoded in this step essentially consists of constructing and choosing the top-$P^{PC}$ features that capture the \textit{directions of maximum variation} in the data. Within this inductive bias, the role of $P^{ex}$ appears to be that -- as more excess features are added -- the variation  captured by each of the top-$P^{PC}$ PCs is likely to increase. Using the directions of maximum variation is certainly not guaranteed to be optimal \cite{jolliffe1982note}, but it tends to be an effective inductive bias in practice as noted by Tukey \cite{tukey1977exploratory} who suggested that high variance components are likely to be more important for prediction unless nature is ``downright mean''.} . }

 \begin{figure}[!t]
  \includegraphics[width=0.95\linewidth]{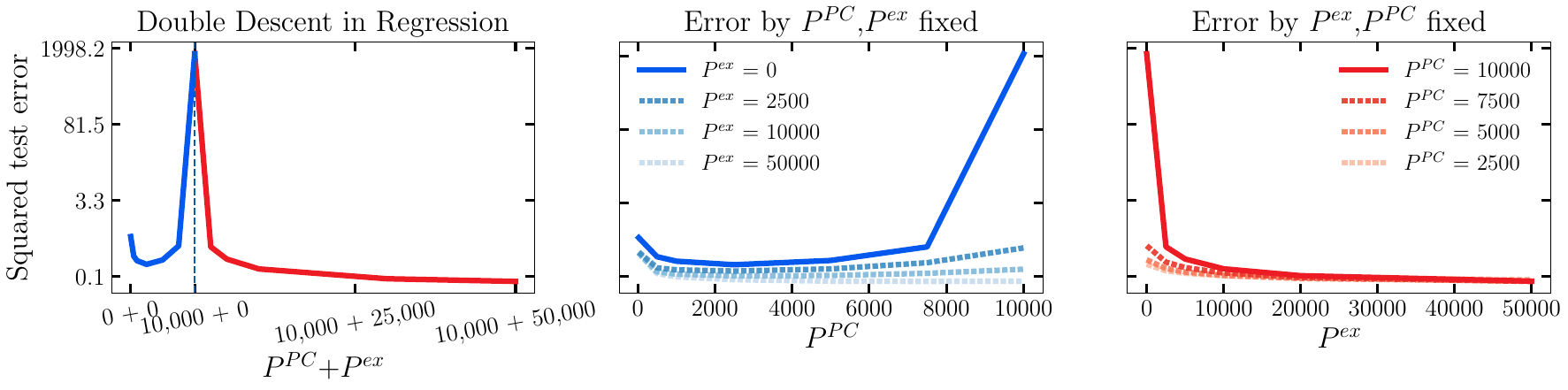}
  \vspace{-0.3cm}
  \caption{{\textbf{Decomposing double descent for RFF Regression.} \small Double descent reproduced from \cite{belkin2019reconciling} (left) can be decomposed into the standard U-curve of ordinary linear regression with $P^{PC}$ features (center) and decreasing error achieved by a \textit{fixed capacity} model with basis improving in $P^{ex}$ (right).} }
  \label{fig:RFF_linear_regression}    \vspace{-0.5cm}
\end{figure}

\squish{Based on this, we can now provide a new explanation for the emergence of double descent in this case study: due to the use of the min-norm solution, the two uncovered mechanisms are implicitly entangled through $\textstyle P^\phi=P^{PC}\!+\!P^{ex}$, $P^{PC}\!=\!\min(n, P^\phi)$ and $P^{ex}\!=\!\max(0, P^\phi-n)$. Thus, we have indeed arrived back at a setup that parallels the previous two experiments:  when $P^\phi\!\leq\! n$, $P^{PC}$ increases monotonically while $P^{ex}\!=\!0$ is constant, while when  $P^\phi>n$ we have constant $P^{PC}=n$ but $P^{ex}$ increases monotonically. Below, we can now test empirically whether studying the two mechanisms separately indeed leads us back to standard convex curves as before. In particular, we also show that -- while a transition between the mechanisms increasing $P^{PC}$ and $P^{ex}$ naturally happens at $P^\phi=n$ in the original experiments -- it is possible to transition elsewhere across the implied complexity axes, creating other thresholds, and to thus disentangle the double descent phenomenon from $n$.}

\subsection{Empirical resolutions to double descent in RFF regression}\label{sec:rffempirical}
 \squish{Mirroring the analyses in \cref{sec:belkinsec}, we now investigate the effects of $P^{PC}$ and $P^{ex}$ in \cref{fig:RFF_linear_regression}. In the left plot, we once more replicate \cite{belkin2019reconciling}'s original experiment, and observe the same apparent trend that double descent emerges as we increase the number of raw parameters $P^\phi=P^{PC}\!+\!P^{ex}$ (where the min-norm solution needs to be applied once $P^\phi\!=\!n$). 
 We then proceed to analyze the effects of varying $P^{PC}$ and $P^{ex}$ separately (while holding the other fixed). As before, we observe in the center plot that varying $P^{PC}$ (determining the actual number of parameters being fit in the regression) for different levels of excess features indeed gives rise to the traditional U-shaped generalization curve. Conversely, in the right plot, we observe that increasing $P^{ex}$ for a fixed number of $P^{PC}$ results in an L-shaped generalization curve -- indeed providing evidence that the effect of increasing the number of raw parameters past $n$ in the original experiment can be more accurately explained as a gradual improvement in the quality of a basis to which a \textit{fixed capacity model} is being fit. }

\begin{figure}[!t]
\centering
  \vspace{-0.3cm}
\subfigure[{\textbf{Shifting the peak.} \hfill  \protect\linebreak \small Transitioning from  $P^{PC}$ to $P^{ex}$  \hfill \protect\linebreak at different values of $P^{PC}$.}]{
\label{fig:peak-moving} \includegraphics[width=.3\textwidth]{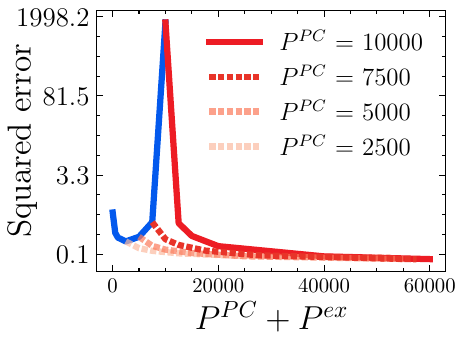}}
\subfigure[\textbf{Multiple descent.} \small Generalization curves with arbitrarily many peaks and locations (including peaks at $P^\phi>n$) can be created by switching between increasing parameters through $P^{PC}$ and $P^{ex}$ multiple times.]{
       \label{fig:multiple-descent-test}  \includegraphics[width=.61\textwidth]{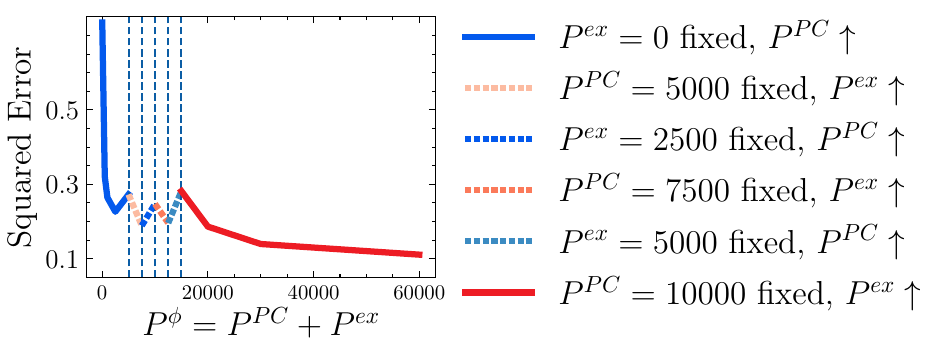}}
    \vspace{-.2cm}    
\caption{\textbf{Disentangling double descent from the interpolation threshold.} \small{The location of the peak(s) in RFF regression generalization error is not inherently linked to the point where $P^\phi=n$. Instead, changes in the mechanism for parameter increase determine the appearance of peaks.}}
        \label{fig:thresholdmoving}\vspace{-.5cm}
\end{figure}

\squish{As before, note that if we connect the solid lines in the center and right plots we recover exactly the double descent curve shown in the left plot of  \cref{fig:RFF_linear_regression}. Alternatively, as we demonstrate in \cref{fig:peak-moving}, fixing $P^{PC}$ at other values and then starting to increase the total number of parameters through $P^{ex}$ allows us \textit{to move or remove the first peak arbitrarily}. In \cref{fig:multiple-descent-test}, we demonstrate that one could even create multiple peaks\footnote{{This experiment is inspired by \cite{chen2021multiple}, who show \textit{multiple} descent in regression by controlling the \textit{data-generating process} (DGP), altering the order of revealing new (un)informative features. Even more striking than through changing the DGP, we show that simply reordering the mechanisms by which \textit{the same raw parameters} are added to the model allows us to arbitrarily increase and decrease test loss.}} by switching between parameter-increasing mechanisms \textit{more than once}.  This highlights that, while the transition from parameter increase through $P^{PC}$ to $P^{ex}$ naturally occurs at $P^\phi\!=\!n$ due to the use of the min-norm solution, the second descent is not actually caused by the interpolation threshold $P^\phi \!=\! n$ itself -- but rather is due to the implicit change in model at exactly this point. Indeed, comparing the generalization curves in \cref{fig:thresholdmoving} with their train-error trajectories which we plot in \cref{app:peak}, it becomes clear that such a second descent can also occur in models that have not yet and will never achieve interpolation of the training data.}

\section{Part 2: Rethinking parameter counting through a classical statistics lens}\label{sec:complexity}
\squish{Thus far, we have highlighted that ``not all model parameters are created equal'' -- i.e. the intuitive notion that not all ways of increasing the number of \textit{raw} parameters in an ML method have the same effect. However, as we saw in the linear regression example, it is not always trivial to deconstruct the underlying mechanisms driving performance. Therefore, instead of having to reason about implicit complexity axes on a case-by-case basis for different models, hyperparameters, or inductive biases, we would rather be able to \textit{quantify} the effect of these factors objectively. 
In what follows, we highlight that all previously considered methods can be interpreted as \textit{smoothers} (in the classical statistics sense \cite{hastie1990GAM}). By making this connection in \cref{sec:smoothers}, we can exploit the properties of this class of models providing us with measures of their \textit{effective} number of parameters. After adapting this concept to our setting (\cref{sec:effp}), we are finally able to \textit{re-calibrate} the complexity axis of the original double descent experiments, finding that they do indeed fold back into more traditional U-shapes (\cref{sec:backtou}). }

\subsection{Connections to smoothers}\label{sec:smoothers}
\squish{Smoothers are a class of supervised learning methods that summarize the relationship between outcomes $Y\!\in\! \mathcal{Y}\! \subset\! \mathbb{R}^k$ and inputs $X\! \in\! \mathcal{X} \subset \mathbb{R}^d$ by ``smoothing'' over values of $Y$ observed in training. More formally, let $k\!=\!1$ w.l.o.g., and denote by $\mathcal{D}^{\text{train}}\!=\! \{(y_i, x_i)\}^n_{i=1}$ the training realizations of $(X, Y) \!\in\! \mathcal{X} \!\times\! \mathcal{Y}$ and by $\mathbf{y}_{\text{train}}\! =\!(y_1, \! \ldots\!, y_n)^T$ the $n\times 1$ vector of training outcomes with respective training indices \smash{$\mathcal{I}_{\text{train}}\!=\!\{1, \ldots, n\}$}. Then, for any admissible input \smash{$x_0 \!\in \! \mathcal{X}$}, a smoother issues predictions}\vspace{-0.4cm}
\begin{equation}
\textstyle \hat{f}(x_0)=\hat{\mathbf{s}}(x_0) \mathbf{y}_{\text{train}}  = \sum_{i \in \mathcal{I}_{\text{train}}} \hat{s}^i(x_0) y_i
\end{equation}
where $\hat{\mathbf{s}}(x_0)= (\hat{s}^1(x_0), \ldots, \hat{s}^n(x_0))$ is a $1\! \times\! n$ vector containing  smoother weights for input $x_0$. A smoother is \textit{linear} if  $\hat{\mathbf{s}}(\cdot)$ does not depend on $\mathbf{y}_{\text{train}}$. The most well-known examples of smoothers rely on weighted (moving-) averages, which includes k-nearest neighbor (kNN) methods and kernel smoothers as special cases. (Local) linear regression and basis-expanded linear regressions, including splines, are other popular examples of linear smoothers (see e.g. \cite[Ch. 2-3]{hastie1990GAM}).

\squish{In \cref{app:smooth}, we show that all methods studied in Part 1 can be interpreted as smoothers, and derive $\hat{\mathbf{s}}(\cdot)$ for each method. To provide some intuition, note that linear regression is a simple textbook example of a linear smoother \cite{hastie1990GAM}, where $\hat{\mathbf{s}}(\cdot)$ is constructed from the so-called projection (or hat) matrix \cite{eubank1984hat}. Further, trees -- which issue predictions by averaging training outcomes within leaves -- are sometimes interpreted as \textit{adaptive nearest neighbor methods} \cite[Ch. 15.4.3]{hastie2009elements} with \textit{learned} (i.e. non-linear) weights, and as a corollary, boosted trees and sums of either admit similar interpretations. }

\subsection{A generalized measure of the \textit{effective} number of parameters used by a smoother}\label{sec:effp}

\squish{The \textit{effective number of parameters} $p_e$ of a smoother was introduced to provide a measure of model complexity which can account for a broad class of models as well as different levels of model regularization (see e.g. \cite[Ch. 3.5]{hastie1990GAM}, \cite[ Ch. 7.6]{hastie2009elements}). This generalized the approach of simply counting raw parameters -- which is not always possible or appropriate -- to measure model complexity. This concept is \textit{calibrated} towards linear regression so that, as we might desire, effective and raw parameter numbers are equal in the case of ordinary linear regression with $p<n$. In this section, we adapt the \textit{variance based} effective parameter definition discussed in \cite[Ch. 3.5]{hastie1990GAM}. Because for fixed  $\hat{\mathbf{s}}(\cdot)$ and outcomes generated with homoskedastic variance $\sigma^2$ we have \smash{$\textstyle Var(\hat{f}(x_0))=||\hat{\mathbf{s}}(x_0)||^2 \sigma^2$}, this definition uses that {$\textstyle \frac{1}{n}\sum_{i \in \mathcal{I}_{\text{train}}} Var(\hat{f}(x_i))\!=\! \frac{ \sigma^2 }{n}\sum_{i \in \mathcal{I}_{\text{train}}}||\hat{\mathbf{s}}(x_i)||^2 \!=\! \frac{\sigma^2}{n}p$} for linear regression:  one can define {$\textstyle p_e\! = \!\sum_{i \in \mathcal{I}_{\text{train}}} ||\hat{\mathbf{s}}(x_i)||^2$} and thus have $p_e=p$ for ordinary linear regression with $n\!<\!p$. As we discuss further in \cref{app:effp}, other variations of such effective parameter count definitions can also be found in the literature. However, this choice is particularly appropriate for our purposes as it has the unique characteristic that it can easily be \textit{adapted to arbitrary input points} -- a key distinction we will motivate next.}

Historically, the smoothing literature has primarily focused on prediction in the classical fixed design setup where expected in-sample prediction error on the \textit{training inputs} $x_i$, with only newly sampled targets $y'_i$, was considered the main quantity of interest \cite{hastie1990GAM, rosset2019fixed}. The modern ML literature, on the other hand, largely focuses its evaluations on out-of-sample prediction error in which we are interested in model performance, or \textit{generalization}, on both unseen targets \textit{and} unseen inputs (see e.g. \cite[Ch. 5.2]{goodfellow2016deep}; \cite[Ch. 4.1]{murphy2022probabilistic}). To make effective parameters fit for modern purposes, it is therefore necessary to adapt $p_e$ to measure the level of smoothing applied \textit{conditional on a given input}, thus distinguishing between training and testing inputs. As $||\hat{\mathbf{s}}(x_0)||^2$ can be computed for \textit{any} input $x_0$, this is straightforward and can be done by replacing $\mathcal{I}_{\text{train}}$ in the definition of $p_e$ by any other set of inputs indexed by $\mathcal{I}_0$. Note that the scale of $p_e$ would then depend on $|\mathcal{I}_{0}|$ due to the summation, while it should actually depend on the number of training examples $n$ (previously implicitly captured through $|\mathcal{I}_{\text{train}}|$) across which the smoothing occurs -- thus we also need to recalibrate our definition by $\textstyle n/|\mathcal{I}_{0}|$. As presented in \cref{def:releff},  we can then measure the \textit{generalized} effective number of parameters $ p^{0}_{\hat{\mathbf{s}}}$ used by a smoother when issuing predictions for \textit{any} set of inputs $\mathcal{I}_0$. 

\begin{defi}[Generalized Effective Number of Parameters] \label{def:releff} For a set of inputs $\{x^0_j\}_{j \in \mathcal{I}_0}$, define the Generalized Effective Number of Parameters $p^{0}_{\hat{\mathbf{s}}}$ used by a smoother with weights $\hat{\mathbf{s}}(\cdot)$ as
\begin{equation}
\textstyle p^{0}_{\hat{\mathbf{s}}} \equiv  p(\mathcal{I}_{0}, \hat{\mathbf{s}}(\cdot))= \frac{n}{|\mathcal{I}_{0}|} \sum_{j \in \mathcal{I}_0} ||\hat{\mathbf{s}}(x^0_j)||^2
\end{equation}
\end{defi}

Note that for the training inputs we recover the original quantity exactly ($\textstyle p_e= p^{\text{train}}_{\hat{\mathbf{s}}}$). Further, for moving-average smoothers with  \smash{$\textstyle \sum^n_{i=1} \hat{\mathbf{s}}^i(x_0)=1$}, it holds that $1 \leq p^{0}_{\hat{\mathbf{s}}} \leq n$. Finally, for kNN estimators we have \smash{$\textstyle p^{0}_{\hat{\mathbf{s}}}=\frac{n}{k}$}, so that for smoothers satisfying \smash{$\textstyle \sum^n_{i=1} \hat{{s}}^i(x_0)=1$}, the quantity \smash{$\textstyle \tilde{k}^{0}_{\hat{\mathbf{s}}}=\frac{n}{p^{0}_{\hat{\mathbf{s}}}}$} also admits an interesting interpretation as measuring  the \textit{effective number of nearest neighbors}.
 
\begin{figure}[!t]    \vspace{-0.4cm}
    \centering    \includegraphics[width=.95\textwidth]{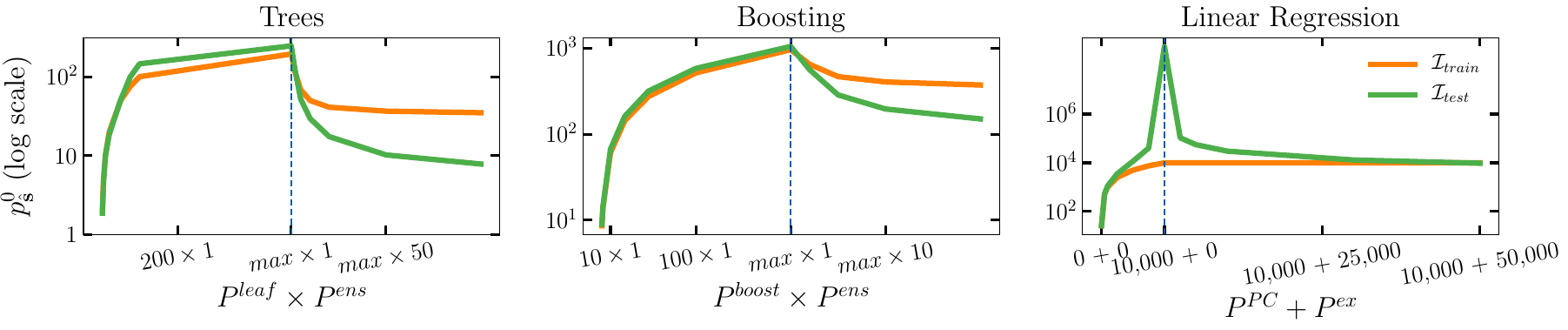}
    \vspace{-0.3cm}
    \caption{\textbf{The effective number of parameters does not increase past the transition threshold.} \small Plotting $p^{\text{train}}_{\hat{\mathbf{s}}}$ (\textcolor{cborange}{orange}) and $p^{\text{test}}_{\hat{\mathbf{s}}}$ (\textcolor{cbgreen}{green}) for the tree (left), boosting (centre) and RFF-linear regression (right) experiments considered in Part 1, using the original composite parameter axes of \cite{belkin2019reconciling}. }
    \label{fig:eff_p_by_x}\vspace{-.5cm}
\end{figure}
 
\begin{figure}[!b]
    \vspace{-0.5cm}
    \centering    \includegraphics[width=.95\textwidth]{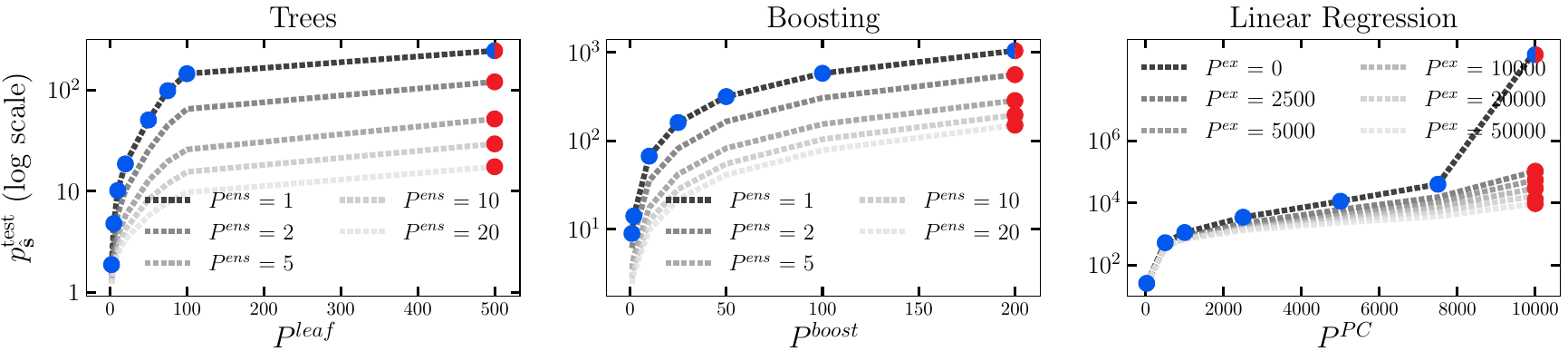}
    \vspace{-0.3cm}
    \caption{\textbf{Understanding the behavior of the test-time effective number of parameters across the two parameter axes of each ML method.} \small Plotting $p^{\text{test}}_{\hat{\mathbf{s}}}$ by $P^{leaf}$ for fixed $P^{ens}$ in trees (left), by $P^{boost}$ for fixed $P^{ens}$ in boosting (centre) and by $P^{PC}$ for fixed $P^{ex}$ in RFF-linear regression (right) highlights that increases along the first parameter axes increase $p^{\text{test}}_{\hat{\mathbf{s}}}$, while increases along the second axes \textit{decrease} $p^{\text{test}}_{\hat{\mathbf{s}}}$. Larger blue and red points (\protect\tikz \protect\fill[sharedblue] (0,0) circle (.6ex);, \protect\tikz \protect\fill[sharedred] (0,0) circle (.6ex);) are points from \cite{belkin2019reconciling}'s original parameter sequence. }\label{fig:eff_p_by_ax1}\vspace{-.4cm}
\end{figure}

\subsection{Back to U: Measuring \textit{effective} parameters folds apparent double descent curves}\label{sec:backtou}

\squish{Using \cref{def:releff}, we can now measure the \textit{effective} number of parameters to re-examine the examples of double descent described in Sections \ref{sec:belkinsec} \& \ref{sec:RFFregression}. We plot the results of the 0-vs-all sub-problem in this section as the 10 one-vs-all models in the full experiment are each individually endowed with effective parameter counts $p^{0}_{\hat{\mathbf{s}}}$. \cref{fig:eff_p_by_x} presents results plotting this quantity measured on the training inputs ($p^{\text{train}}_{\hat{\mathbf{s}}}$) and testing inputs ($p^{\text{test}}_{\hat{\mathbf{s}}}$) against the original, composite parameter axes of \cite{belkin2019reconciling}, \cref{fig:eff_p_by_ax1} considers the behavior of $p^{\text{test}}_{\hat{\mathbf{s}}}$ for the two distinct mechanisms of parameter increase separately, and in \cref{fig:y_by_eff_p} we plot test error against $p^{\text{test}}_{\hat{\mathbf{s}}}$, finally replacing raw with effective parameter axes. }

\squish{By examining the behavior of $p^{0}_{\hat{\mathbf{s}}}$ in \cref{fig:eff_p_by_x} and \cref{fig:eff_p_by_ax1}, several interesting insights emerge.  First, we observe that these results complement the intuition developed in Part 1: In all cases, measures of the \textit{effective} number of parameters never increase after the threshold where the mechanism for increasing the raw parameter count changes. Second, the distinction of measuring effective number of parameters used on \textit{unseen inputs}, i.e. using $p^{\text{test}}_{\hat{\mathbf{s}}}$ instead of $p^{\text{train}}_{\hat{\mathbf{s}}}$, indeed better tracks the double descent phenomenon which itself emerges in generalization error (i.e. is also estimated on unseen inputs).  This is best illustrated in \cref{fig:eff_p_by_x} in the linear regression example where  $p^{\text{train}}_{\hat{\mathbf{s}}}=n$ is \textit{constant} once $P^\phi \geq n$ -- as is expected: simple matrix algebra reveals that \smash{$\hat{\mathbf{s}}(x_i)=\mathbf{e}_i$} (with \smash{$\mathbf{e}_i$} the $i^{th}$ indicator vector) for any training example $x_i$ once $P^\phi \geq n$. Intuitively, no smoothing across training labels occurs on the training inputs after the interpolation threshold as each input simply predicts its own label (note that this would also be the case for regression trees if each training example fell into its own leaf; this does not occur in these experiments as leaves are already pure before this point). As we make more precise in \cref{app:impossible}, this observation leads us to an interesting impossibility result:  the \textit{in-sample} prediction error of \textit{interpolating models} therefore cannot experience a second descent in the (raw) overparameterized regime.  This provides a new reason for the historical absence of double descent shapes (in one of \textit{the} primary settings in which the U-shaped  tradeoff was originally motivated \cite{hastie1990GAM}), complementing the discussion on the lack of previous observations in \cite{belkin2019reconciling}. 
Third, while $p^{\text{train}}_{\hat{\mathbf{s}}}$ is therefore not useful to understand double descent in generalization error, we note that for \textit{unseen inputs} different levels of smoothing across the training labels can occur even in the interpolation regime -- and this, as quantified by  $p^{\text{test}}_{\hat{\mathbf{s}}}$, \textit{does} result in an informative complexity proxy.} 
\begin{figure}[t]  \vspace{-0.4cm}
    \centering    \includegraphics[width=.95\textwidth]{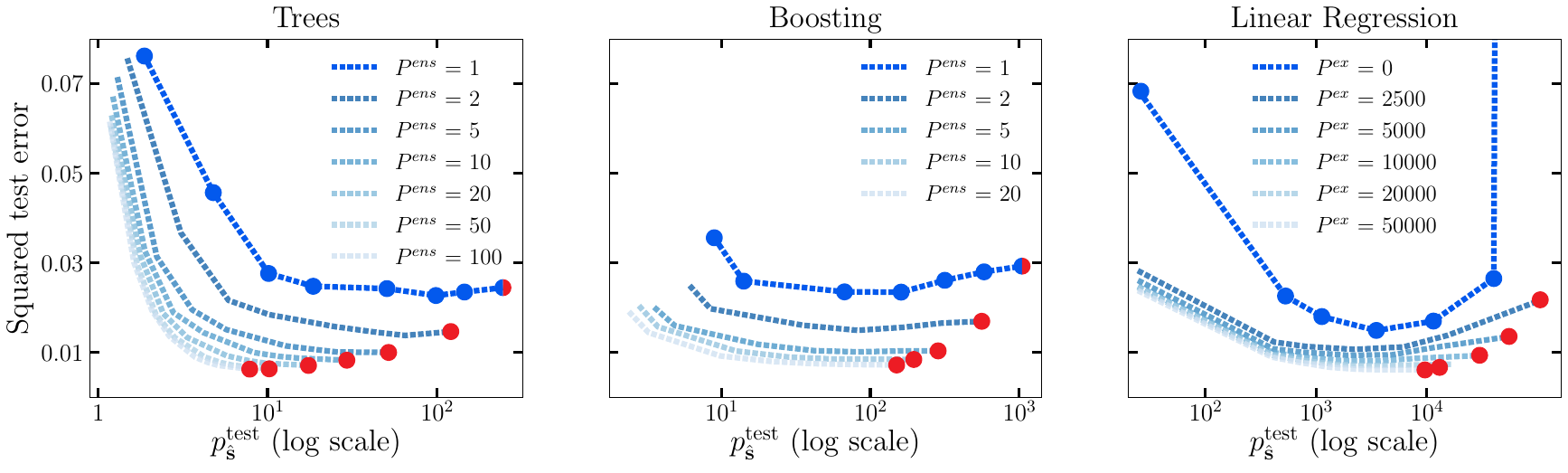}
    \vspace{-.2cm}
    \caption{\textbf{Back to U.} \small Plotting test error against the \textit{effective} number of parameters as measured by $p^{\text{test}}_{\hat{\mathbf{s}}}$  (larger blue and red points: \protect\tikz \protect\fill[sharedblue] (0,0) circle (.6ex);, \protect\tikz \protect\fill[sharedred] (0,0) circle (.6ex);) for trees (left), boosting (centre) and RFF-linear regression (right) eliminates the double descent shape.
    In fact, points from the first dimension of \cite{belkin2019reconciling}'s composite axis (\protect\tikz \protect\fill[sharedblue] (0,0) circle (.6ex);) continue to produce the familiar U-shape, while points from the second dimension of the composite axis  (\protect\tikz \protect\fill[sharedred] (0,0) circle (.6ex);) -- which originally created the apparent second descent -- \textit{fold} back into classical U-shapes. Dotted lines (\textcolor{lineblue}{\sampleline{dash pattern=on .16em off .09em, line width= 1.8pt}}) show the effect of perturbing the first complexity parameter at different fixed values of the second. }
    \label{fig:y_by_eff_p}\vspace{-.5cm}
\end{figure}

\squish{This becomes obvious in \cref{fig:y_by_eff_p}, where we plot the test errors of \cite{belkin2019reconciling}'s experimental setup  against $p^{\text{test}}_{\hat{\mathbf{s}}}$  (larger blue and red points: \protect\tikz \protect\fill[sharedblue] (0,0) circle (.6ex);, \protect\tikz \protect\fill[sharedred] (0,0) circle (.6ex);). For each such point after the interpolation threshold (\protect\tikz \protect\fill[sharedred] (0,0) circle (.6ex);), we also provide dotted contour lines (\textcolor{lineblue}{\sampleline{dash pattern=on .16em off .09em, line width= 1.8pt}}) representing the effect of reducing the parameters along the first complexity axis ($P^{leaf}$, $P^{boost}$ and $P^{PC}$), resulting in models that can no longer \textit{perfectly interpolate} (e.g. for trees, we fix their number $P^{ens}$ and reduce their depth through $P^{leaf}$). Finally, we discover that once we control for the effective parameters of each model in this way, we consistently observe that each example indeed folds back into shapes that are best characterized as standard convex (U-shaped) curves -- providing a resolution to the apparent tension between traditional statistical intuition and double descent! We also note an interesting phenomenon: increasing $P^{ens}$ and $P^{ex}$ appears to not only shift the complexity-generalization contours downwards (decreasing error for each value of $P^{leaf}$, $P^{boost}$ and $P^{PC}$), but also shifts them to the left (decreasing the effective complexity implied by each value of $P^{leaf}$, $P^{boost}$ and $P^{PC}$, as also highlighted in \cref{fig:eff_p_by_ax1}) and flattens their ascent (making error less sensitive to interpolation). That is, models in (or close to) the interpolation regime discovered by \cite{belkin2019reconciling} exhibit a very interesting behavior where setting $P^{ex}$ and $P^{ens}$ larger actually decreases the test error through a \textit{reduction} in the effective number of parameters used. }

\section{Consolidation with Related Work}\label{sec:relwork}
\squish{In this section, we consolidate our findings with the most relevant recent work on \textit{understanding} double descent; we provide an additional broader review of the literature in \cref{app:lit}. A large portion of this literature has focused on modeling double descent in the \textit{raw} number of parameters in linear regression, where these parameter counts are varied by employing random feature models \cite{belkin2019reconciling, belkin2020two} or by varying the ratio of $n$ to input feature dimension $d$ \cite{advani2020high, bartlett2020benign, derezinski2020exact, hastie2022surprises}. This literature, typically studying the behavior and conditioning of feature covariance matrices, has produced precise theoretical analyses of double descent for particular models as the \textit{raw} number of parameters is increased (e.g. \cite{bartlett2020benign, hastie2022surprises}). Our insights in \cref{sec:RFFregression,sec:complexity} are \textit{complementary} to this line of work, providing a new perspective in terms of (a) decomposing \textit{raw} parameters into two separate complexity axes which jointly produce the double descent shape only due to the choice of (min-norm) solution and (b) the underlying \textit{effective} number of parameters used by a model. Additionally, using theoretical insights from this line of work can make more precise the role of $P^{ex}$ in improving the quality of a learned basis: as we show in \cref{app:singular}, increasing $P^{ex}$ in our experiments indeed leads to the PC feature matrices being better conditioned. } 

\squish{Further, unrelated to the smoothing lens we use in Part 2 of this paper, different notions of effective parameters or complexity have appeared in other studies of double descent in specific ML methods: \cite{buschjager2021there} use Rademacher complexity (RC) in their study of random forests but find that RC cannot explain their generalization because forests do not have lower RC than individual trees (unlike the behavior of our $p^{\text{test}}_{\hat{\mathbf{s}}}$ in this case). \cite{maddox2020rethinking} use \cite{mackay1991bayesian}'s Bayesian interpretation of effective parameters based on the Hessian of the training loss when studying neural networks. While this proxy was originally motivated in a linear regression setting, it \textit{cannot} explain double descent in linear regression because, as discussed further in \cref{app:effp}, it does not decrease therein once $p>n$. Finally, \cite{derezinski2020exact} compute effective ridge penalties implied by min-norm linear regression, and \cite{dwivedi2020revisiting} consider minimum description length principles to measure complexity in ridge regression. Relative to this literature, our work differs not only in terms of the ML methods we can consider (in particular, we are uniquely able to explain the tree- and boosting experiments through our lens), but also in the insights we provide e.g. we distinctively propose to distinguish between effective parameters used on train- versus test-examples, which we showed to be crucial to explaining the double descent phenomenon in \cref{sec:backtou}.}

\section{Conclusion and Discussion
}\label{sec:discuss}

\textbf{Conclusion: \textit{A Resolution to the ostensible tension between non-deep double descent and statistical intuition.}} We demonstrated that existing experimental evidence for double descent in trees, boosting and linear regression does not contradict the traditional notion of a U-shaped complexity-generalization curve: to the contrary, we showed that in all three cases, there are actually two \textit{independent} underlying complexity axes that each exhibit a standard convex shape, and that the observed double descent phenomenon is a direct consequence of transitioning between these two distinct mechanisms of increasing the total number of model parameters. Furthermore, we highlighted that when we plot a measure of the \textit{effective, test-time,} parameter count (instead of \textit{raw} parameters) on their x-axes, the apparent double descent curves indeed fold back into more traditional U-shapes. 

\squish{\textbf{What about \textit{deep} double descent?} In this work, we intentionally limited ourselves to the study of \textit{non-deep} double descent. Whether the approach pursued in this work could provide an alternative path to understanding double descent in the case of deep learning -- arguably its most prominent setting -- is thus a very natural next question. It may indeed be instructive to investigate whether there also exist multiple implicitly entangled complexity axes in neural networks, and whether this may help to explain double descent in that setting. In particular, one promising approach to bridging this gap could be to combine our insights of \cref{sec:RFFregression} with the known connections between random feature models and two-layer neural networks \cite{sun2018approximation}, and stochastic gradient descent and min-norm solutions \cite{gunasekar2017implicit}. We consider this a fruitful and non-trivial direction for future research. }

\squish{\textbf{What are the practical implications of these findings?} On the one hand, with regards to the specific ML methods under investigation, our empirical results in \cref{sec:belkinsec,sec:RFFregression} imply interesting trade-offs between the need for hyperparameter tuning and raw model size. All methods appear to have one hyperparameter axis to which error can be highly sensitive -- $P^{leaf}$, $P^{boost}$ and $P^{PC}$ -- while along the other axis, ``bigger is better'' (or at least, not worse).  In fact, it appears that the higher $P^{ens}$ or $P^{ex}$, the less sensitive the model becomes to changes along the first axis. This may constitute anecdotal evidence that the respective first axis can be best understood as a train-time bias-reduction axis -- it controls how well the \textit{training} data can be fit (increasing parameters along this axis reduces underfitting – only when set to its maximum can interpolation be achieved). The second axis, conversely, appears to predominantly achieve variance-reduction at \textit{test-time}: it decreases $||\mathbf{\hat{s}}(x_0)||$, reducing the impact of noise by smoothing over more training examples when issuing predictions for unseen inputs.}

On the other hand, our results in \cref{sec:complexity} suggest interesting new avenues for model selection more generally, by highlighting potential routes of redemption for classical criteria trading off in-sample performance and parameter counts (e.g. \cite{akaike1974new, mallows2000some}) -- which have been largely abandoned in ML in favor of selection strategies evaluating held-out prediction error \cite{raschka2018model}. While criteria based on \textit{raw} parameter counts may be outdated in the modern ML regime, selection criteria based on \textit{effective} parameter counts $p^{\text{test}}_{\hat{\mathbf{s}}}$ used on a test-set could provide an interesting new alternative, with the advantage of not requiring access to labels on held-out data, unlike error-based methods. In \cref{app:modelsel}, we provide anecdotal evidence that when choosing between models with different hyperparameter settings that all achieve \textit{zero training error}, considering each model's $p^{\text{test}}_{\hat{\mathbf{s}}}$ could be used to identify a good choice in terms of generalization performance: we illustrate this for the case of gradient boosting where we use $p^{\text{test}}_{\hat{\mathbf{s}}}$ to choose additional hyperparameters.  Investigating such approaches to model selection more extensively could be another promising avenue for future work.

\newpage
\section*{Acknowledgements}
We would like to thank Fergus Imrie, Tennison Liu, James Fox, and the anonymous reviewers for insightful comments and discussions on earlier drafts of this paper. AC and AJ gratefully acknowledge funding from AstraZeneca and the Cystic Fibrosis Trust respectively.
This work was supported by Azure sponsorship credits granted by Microsoft’s AI for Good Research Lab.

%%%%%%%%%%%%%%%%%%%%%%%%%%%%%%%%%%%%%%%%%%%%%%%%%%%%%%%%%%%%

\bibliographystyle{alpha}
\bibliography{main}
\newpage
\appendix
\section*{Appendix}
This Appendix is structured as follows: In \cref{app:lit}, we present an additional, broader, literature review. In \cref{app:leastsquaressolution}, we present additional background on the linear regression case study, including the proof of Proposition 1 (\cref{app:lr-proof}) and an empirical study of the effect of excess features on the conditioning of PC feature matrices (\cref{app:singular}). In \cref{app:smooth}, we present additional background on smoothers, including an analysis of the impossibility of double descent in prediction error in fixed design settings (\cref{app:impossible}) and derivation of the smoothing matrices associated with linear regression, trees and boosting (\cref{app:deriveds}). In \cref{app:effp}, we present additional background and other definitions of effective parameter measures. Finally, in \cref{app:results}, we further discuss the experimental setup and close by presenting additional results.

\section{Additional literature review}\label{app:lit}
\textbf{Double descent as a phenomenon.} Although double descent only gained popular attention since \cite{belkin2019reconciling}'s influential paper on the topic, the phenomenon itself had previously been observed: \cite{loog2020brief} provide a historical note highlighting that \cite{vallet1989linear} may have been the first to demonstrate double descent in min-norm linear regression; \cite{belkin2019reconciling} themselves note that double descent-like phase transitions in neural networks had previously been observed in \cite{bos1996dynamics, advani2020high, neal2018modern, spigler2018jamming}. As discussed above, in addition to linear regressions and neural networks, \cite{belkin2019reconciling} also present double descent curves for trees and boosting. Since then, a rich literature on other deep double descent phenomena has emerged, including double descent in the number of training epochs\cite{nakkiran2021deep}, sparsity \cite{he2022sparse}, data generation \cite{luzi2021double} and transfer learning \cite{dar2022double}. We also note that the occurrence of double descent has been studied for subspace regression methods themselves \cite{dar2020subspace, teresa2022dimensionality}, but has not been linked to double descent in min-norm linear regressions as we do here. For a recent review of double descent more broadly see \cite{viering2022shape}.

\textbf{Theoretical understanding of double descent.}
In addition to the studies of double descent in linear regression discussed in the main text\cite{advani2020high, bartlett2020benign, derezinski2020exact, belkin2020two, kuzborskij2021role, hastie2022surprises}, we note that further theoretical studies of double descent in neural networks exist (see e.g. \cite{dar2021farewell} for a comprehensive review). These mainly focus on exact expressions of bias and variance terms by taking into account all sources of randomness in model training and data sampling \cite{neal2018modern, adlam2020understanding, d2020double, lin2021causes}. Finally, different to our peak-moving experiments in \cref{sec:rffempirical} and \cref{app:peak} which show that the location of the second descent in all original non-deep experiments of \cite{belkin2019reconciling} is determined by a change in the underlying method or training procedure, \cite{chen2021multiple} show that one can ``design your own generalization curve'' in linear regression by controlling the data-generating process through the order by which newly (un)informative features are revealed, meaning that one could construct a learning curve with essentially arbitrary number of descents in this way. Similarly, we show in \cref{fig:multiple-descent-test} in the main text and \cref{fig:peak-design} in \cref{app:peak} that multiple peaks can be achieved by switching between mechanisms for adding parameters multiple times. Note that \cite{chen2021multiple} consider only the effect of data-generating mechanisms and arrive at the conclusion that the generalization curves observed in practice must arise due to interactions between typical data-generating processes and inductive biases of algorithms and ``highlight that the nature of these interactions is far from understood'' \cite[p. 3]{chen2021multiple} -- with our paper, we contributed to the understanding of these interactions by studying the role of \textit{changes} in inductive biases in the case of trees, boosting and linear regression.

\textbf{Interpolation regime.} Entangled in the double descent literature is the \textit{interpolation regime} or the \textit{benign overfitting} effect (e.g. \cite{belkin2018understand, ma2018power, bartlett2020benign, chatterji2021finite}). That is, the observation that models with near-zero training error can achieve excellent test set performance. This insight was largely motivated by large deep learning architectures where this is commonplace \cite{ma2018power}. While this line of research does provide an important update to historic intuitions which linked perfect train set performance with poor generalization, it is entirely compatible with the ideas presented in this work. In particular, in \Cref{fig:modelsel} we demonstrate that \textit{within} the interpolation regime the number of effective parameters used on test examples and the predictive performance on those test examples can vary greatly. Therefore, benign overfitting may indeed result in excellent performance \textit{without} contradicting the traditional notion of a U-shaped complexity curve. {Additionally, note that while the double descent phenomenon is often presented as a consequence of benign overfitting in the interpolation regime, we show in \cref{app:peak} that the location of the second descent is not inherently linked to the interpolation regime at all in the non-deep experiments of \cite{belkin2019reconciling} -- but rather a consequence of the point of change between multiple mechanisms of increasing parameter count.}

\textbf{Other related works.} Finally, we briefly discuss some other works that might provide further context on topics touched upon in this text. \cite{baldi1989neural} study \textit{linear} neural networks through the lens of principal component analysis. Recent theoretical work suggests that, when optimized using stochastic gradient descent, these networks tend to converge to low norm or rank solutions (e.g. \cite{gunasekar2017implicit, arora2019implicit}). Aspects of double descent, which were referred to as \textit{peaking} in older literature, have been associated with the eigenvalues of the sample covariance matrix in a limited setting involving linear classifiers as far back as \cite{raudys1998expected}. Much of this work focused on increasing the number of samples for a fixed model (e.g. \cite{duin1995small, skurichina1996stabilizing}). Prior to being rediscovered in the context of neural networks and larger models, more recent work investigated the peaking phenomenon in modern machine learning settings such as semi-supervised learning \cite{krijthe2016peaking}.

\section{More on linear regression} \label{app:leastsquaressolution}
In this section, we first present a brief refresher on SVD, PCA and PCR to provide some further background to the discussion in the main text and the following subsections. We then present a proof of Proposition 1 in \cref{app:lr-proof} and finally empirically analyze the effects of excess features on the conditioning of the PCR feature matrix in \cref{app:singular}. 
\subsection{Background on SVD, PCA and PCR}\label{app:svdbackground}

In this section we provide a brief refresher on some of the mathematical tools that are fundamental to the insights presented in the main text. We begin with the singular value decomposition (SVD) which acts as a generalization of the standard eigendecomposition for a non-square matrix $\mathbf{X} \in \mathbb{R}^{n\times d}$ (see e.g. \cite{kalman1996singularly, deisenroth2020mathematics}). The SVD consists of the factorization $\mathbf{X} = \mathbf{U} \bm{\Sigma} \mathbf{V}^T$ where the columns of $\mathbf{U} \in \mathbb{R}^{n\times n}$ and rows of $\mathbf{V}^T \in \mathbb{R}^{d\times d}$ consist of the left and right \textit{singular vectors} of $\mathbf{X}$ respectively and both form orthonormal bases of $\mathbf{X}$. Then, $\bm{\Sigma} \in \mathbb{R}^{n\times d}$ contains the so-called \textit{singular values} $\sigma_1 \geq \ldots \geq \sigma_{\min(n,d)}$ along the diagonal with zeros everywhere else. Analogous to eigenvalues and eigenvectors, the SVD satisfies $\mathbf{X} \mathbf{V}_i = \sigma_i \mathbf{U}_i$ for all $i \in 1, \ldots, \min(n,d)$. We note that for any $\sigma_i = 0$, the corresponding $\mathbf{V}_i$ lies in the nullspace of $\mathbf{X}$ (while $\mathbf{U}_i$ lies in the nullspace of $\mathbf{X}^T$). The SVD results in the geometric intuition of factorizing the transformation $\mathbf{X}$ into three parts consisting of a basis change in $\mathbb{R}^d$, followed by a scaling by the singular values that adds or removes dimensions, and then a further basis change in $\mathbb{R}^n$. We note that any columns of $\mathbf{U}$ and $\mathbf{V}$ corresponding to either null singular values (i.e. $\sigma_i = 0$) or the $\max(n,d) - \min(n,d)$ non-square dimensions of $\mathbf{X}$ are redundant in the SVD. Therefore we can equivalently express the SVD in its so-called \textit{compact} form with these vectors and their corresponding rows/columns in $\bm{\Sigma}$ removed. 

The SVD is intimately linked to dimensionality reduction through principal component analysis (PCA) \cite{pearson1901liii,fisher1923studies, wall2003singular}. Recall that PCA is a technique that finds a basis of a centered matrix $\mathbf{X}$ by greedily selecting each successive basis dimension to maximize the variance within the data. Typically, this is presented as selecting each basis vector $\mathbf{b}_i = \argmax_{||\mathbf{b}|| = 1} || \mathbf{\hat{X}}_i \mathbf{b} ||^2$
where $\mathbf{\hat{X}}_i$ denotes the original (centered) data matrix $\mathbf{X}$ with the first $i - 1$ directions of variation removed (i.e. $\mathbf{\hat{X}}_i = \mathbf{X} - \sum_{k=1}^{i-1} \mathbf{b}_k \mathbf{b}_k^T \mathbf{X}$). By expanding $|| \mathbf{\hat{X}}_i \mathbf{b} ||^2 = \mathbf{b}^T \mathbf{\hat{X}}_i^T \mathbf{\hat{X}} \mathbf{b}^T$ we can notice that the basis vectors obtained are also the eigenvectors of $\mathbf{X}^T\mathbf{X}$ which is proportional to the sample covariance matrix.
The dimensionality reduction then occurs by selecting a subset of these new coordinates, ordered by highest variance, to represent a low-rank approximation of $\mathbf{X}$. The relationship to the SVD is illuminated by noticing that, for a centered $\mathbf{X}$, we can use the singular value decomposition such that
\begin{align*}
    \mathbf{X}^T \mathbf{X} &= (\mathbf{U} \bm{\Sigma} \mathbf{V}^T)^T (\mathbf{U} \bm{\Sigma} \mathbf{V}^T) \\
    &= \mathbf{V} \bm{\Sigma}^T \bm{\Sigma} \mathbf{V}^T \\
    &= \mathbf{V} \bm{\Sigma}^2 \mathbf{V}^T.
\end{align*}
This reveals that $\mathbf{V}$, the orthonormal basis containing right singular vectors of $\mathbf{X}$ and obtained through the SVD, produces exactly the eigenvectors of $\mathbf{X}^T\mathbf{X}$ required for PCA. Then the PCA operation $PCA_k(\mathbf{X}):\mathbb{R}^{n\times d} \to \mathbb{R}^{n\times k}$ (where $k \leq d$ is the user-defined number of principal components retained) may be obtained as $PCA_k(\mathbf{X}) = \mathbf{X}\mathbf{V}_{k} = \mathbf{U}_{k} \bm{\Sigma}_{k}$ where we overload notation such that the subscript of each matrix denotes keeping only the vectors corresponding to those first $k$ principal components (ordered according to the magnitude of their singular values).

A natural idea that emerges from the description of PCA is to apply it in the supervised setting. Specifically, we might wish to first perform the unsupervised dimensionality reduction step of PCA followed by least squares linear regression on the transformed data. This is exactly the approach taken in principal component regression (PCR) \cite{kendall1957course, hotelling1957relations}. Mathematically this can be expressed as solving $\bm{\hat{\beta}} = \min_{\bm{\beta}} (\mathbf{y} - \mathbf{X}\mathbf{V}_k\bm{\beta})^2$ for coefficients $\bm{\beta} \in \mathbb{R}^k$ and targets $\mathbf{y} \in \mathbb{R}^n$. Then the least squares solution takes the form $\bm{\hat{\beta}} = (\mathbf{V}_k^T\mathbf{X}^T\mathbf{X}\mathbf{V}_k)^{-1}\mathbf{V}_k^T\mathbf{X}^T\mathbf{y}$. The assumption of PCR is that the high variance components are most important for modeling the targets $\mathbf{y}$. While this is not necessarily always the case\footnote{Indeed, an alternative approach is to use standard variable selection techniques on the constructed dimensions of the transformed data (see e.g. \cite{sutter1992principal}).} \cite{jolliffe1982note}, PCR has historically been an important method in the statistician's toolbox (e.g. \cite{hastie2009elements}). In the main text we make use of the connection between PCR and min-norm least squares on underdetermined problems (i.e. a fat data matrix). In particular, we show that applying min-norm least squares in this setting is equivalent to applying PCR with all principal components retained. We provide a proof for this proposition in \Cref{app:lr-proof}. Of course, related mathematical connections between min-norm solutions and the singular value decomposition have been made in previous work (see e.g. \cite[Ch. 5.7]{golub2013matrix}). The regularization effect of PCR also has a known connection to ridge regression where both methods penalize the coefficients of the principal components \cite[Ch. 3.5.1]{hastie2009elements}. The former discards components corresponding to the smallest singular values while the latter shrinks coefficients in proportion to the size of the singular values\footnote{This connection explains why the explicit regularization with the ridge penalty in \cite{hastie2022surprises} and the implicit regularization of min-norm solutions have a consistent shrinking effect on test loss.}.

\subsection{Proof of Proposition 1}\label{app:lr-proof}
\minnormpca*
\begin{proof}
We will use the compact singular value decomposition $\mathbf{X} = \mathbf{U} \bm{\Sigma} \mathbf{V}^T$ (i.e. with the $d-n$ redundant column vectors dropped from the full $\bm{\Sigma}$ and $\mathbf{V}$). Specifically, the columns of $\mathbf{U} \in \mathbb{R}^{n \times n}$ and $\mathbf{V}^T \in \mathbb{R}^{n \times d}$ form orthonormal bases and $\bm{\Sigma} \in \mathbb{R}^{n \times n}$ is a diagonal matrix with the singular values along the diagonal. The transformation of $\mathbf{X}$ to $\mathbf{B}$ is then given by $\mathbf{X} \mathbf{V} = \mathbf{U} \bm{\Sigma} = \mathbf{B}$. It is sufficient to show that for some test vector $\mathbf{x} \in \mathbb{R}^d$ in the original data space and its projection to the reduced basis space $\mathbf{b} = \mathbf{V}^T \mathbf{x} \in \mathbb{R}^n$, then the predictions $\hat{y}$ of both models are equal such that $\mathbf{x}^T \hat{\bm{\beta}}^\text{MN} = \mathbf{b}^T \hat{\bm{\beta}}^\text{SVD}$. 

\begin{align*}
    \hat{y}^\text{MN} = \mathbf{x}^T \hat{\bm{\beta}}^\text{MN} &= \mathbf{x}^T \mathbf{X}^T (\mathbf{X} \mathbf{X}^T)^{-1} \mathbf{y} \\
    &= \mathbf{x}^T (\mathbf{U} \bm{\Sigma} \mathbf{V}^T)^T ((\mathbf{U} \bm{\Sigma} \mathbf{V}^T) (\mathbf{U} \bm{\Sigma} \mathbf{V}^T)^T)^{-1} \mathbf{y} \\
    &= \mathbf{x}^T \mathbf{V} \bm{\Sigma} \mathbf{U}^T (\mathbf{U} \bm{\Sigma} \bm{\Sigma} \mathbf{U}^T)^{-1} \mathbf{y} \\
    &= \mathbf{x}^T \mathbf{V} \mathbf{B}^T (\mathbf{B} \mathbf{B}^T)^{-1} \mathbf{y} \\
    &= \mathbf{b}^T \mathbf{B}^T (\mathbf{B} \mathbf{B}^T)^{-1} \mathbf{y} \\
    &= \mathbf{b}^T \hat{\bm{\beta}}^\text{SVD} \\
    &= \hat{y}^\text{SVD}
\end{align*}
\end{proof}

\subsection{Understanding the impact of $P^{ex}$: Examining its impact on singular values of the PCR feature matrix}\label{app:singular}
In this section, we empirically investigate \textit{how} increasing $P^{ex}$ for a fixed number of $P^{PC}$ results in a gradual improvement in the quality of a basis of fixed dimension, revisiting the experimental setting of \cref{sec:rffempirical} in the main text. 

In particular, we consider how excess features affect the conditioning of the feature matrix $\mathbf{B}_k = PCA_k(\mathbf{X}) = \mathbf{X}\mathbf{V}_{k} = \mathbf{U}_{k} \bm{\Sigma}_{k}$, used in principal components regression, for different values of $k\equiv P^{PC}$. As noted in \cite{poggio2019double}, the condition number  $\kappa(\mathbf{A})$ of a matrix $\mathbf{A}$ measures how much errors in $\mathbf{y}$ impact solutions when solving a system of equations $\mathbf{A}\mathbf{\beta}=\mathbf{y}$, and is given by $\kappa(\mathbf{A})=\frac{\sigma_{\max}(\mathbf{A})}{\sigma_{\min}(\mathbf{A})}$, the ratio of maximal to minimal singular value of $\mathbf{A}$. Because in principal components regression $\mathbf{B}_k$ is constructed from the top $k \in \{1, \ldots, n\}$ principal components, it is easy to see that $\sigma_{\max}(\mathbf{B}_k)=\sigma_1$ and $\sigma_{min}(\mathbf{B}_k)=\sigma_k$ -- the largest and k-th largest singular value of $\mathbf{X}$ respectively (as defined in \cref{app:svdbackground}) -- so that in principal components regression we generally have $\kappa(\mathbf{B}_k)=\frac{\sigma_1}{\sigma_k}$. 

To understand how the raw number of features $P^\phi$ in the experimental setup of \cref{sec:rffempirical} in the main text impacts basis quality, it is thus instructive to study the behavior of the singular values of the random Fourier feature matrix $\mathbf{\Phi}$ as $P^\phi$ grows. In \cref{fig:singvals}, we therefore plot $\sigma_k$ and  $\kappa(\mathbf{B}_k)$ for different values of $P^\phi$ and observe that indeed, as $P^\phi$ (and thereby$P^{ex}=P^\phi - k$) increases, $\sigma_k$ substantially increases in magnitude especially for large $k\equiv P^{PC}$. This provides empirical evidence that the conditioning of feature matrices for principal components regression at large values of $k$ indeed substantially improves as we add excess raw features (i.e. $\kappa(\mathbf{B}_k)$ decreases substantially -- particularly at large $k$ -- as can be seen in the right panel of \cref{fig:singvals}). 

\begin{figure}
    \centering
    \includegraphics[width=0.8\textwidth]{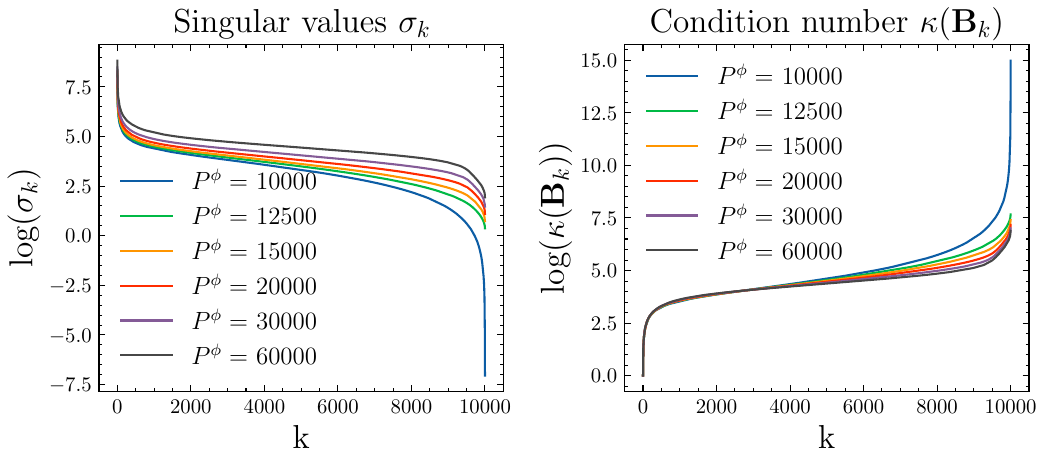}
    \caption{\textbf{The effect of $P^{\phi}$ on the conditioning of the PCR feature matrix.} \small Singular values $\sigma_k$ (left) and $\kappa(\mathbf{B}_k)$ (right) derived from the random Fourier feature matrix $\mathbf{\Phi}$, as a function of $k$ and for different raw number of features $P^\phi$.}
    \label{fig:singvals}
\end{figure}

\section{More on smoothers: Bias-variance, on the impossibility of in-sample double descent and derivation of smoothing matrices}\label{app:smooth}
In this section, we give some further background on the well-known relationship between bias and variance in smoothers, then discuss the impossibility of double descent in prediction error in \textit{fixed design} settings, and finally derive the smoothing matrices implied by the ML methods used in the main text.
\subsection{Background: Bias and variance in smoothers}
In this section, we discuss the well-known bias-variance relationship in smoothers (which indeed determines the U-shape of their original complexity-error curves), following the discussion in \cite[Sec. 3.3]{hastie1990GAM}. Under the standard assumption that targets are generated with true expectation $f^*(x_0)$ and homoskedastic variance  $\sigma^2$, the bias and variance of a smoother $\hat{\mathbf{s}}(\cdot)$ evaluated at test point $x_0$ (for \textit{fixed} input points $\{x_i\}_{i \in \mathcal{I}_\text{train}}$ and \textit{fixed} smoother $\hat{\mathbf{s}}(\cdot)$), can be written as:
\begin{align}
\textstyle \text{Bias}(\hat{{f}}, x_0) = f^*(x_0) - \sum_{i \in \mathcal{I}_{\text{train}} }\hat{s}^i(x_0) f^*(x_i) \label{eq:bias} \\
\text{Var}(\hat{{f}}, x_0) = \text{Var}(\hat{s}(x_0) \mathbf{y}_{\text{train}} ) = ||\hat{s}(x_0)||^2 \sigma^2 \label{eq:var}
\end{align}

Note that some of the historical intuition behind the U-shaped bias-variance tradeoff with model complexity on the x-axis appears to lead back to exactly the comparison between these two terms evaluated for \textit{training inputs} in smoothers --\cite{neal2019bias} trace explicit discussion of bias-variance tradeoff back to at least \cite{hastie1990GAM} (who indeed discuss it in the context of smoothers), preceding \cite{geman1992neural}'s influential machine learning paper on the topic. 
A bias-variance tradeoff is easily illustrated by considering e.g. a  kNN smoother\footnote{Recall that, possibly counterintuitively, a kNN smoother is most complex when $k=1$, where at every training input the prediction is simply the original training label, and least complex when $k=n$, where it just outputs a sample average}: a 1NN estimator has \textit{no bias} when predicting outcomes for a training input but variance $\sigma^2$, while a $k(\!>\!1)$NN estimator generally incurs some bias but has variance $\sigma^2/k$ -- we thus expect a U-shaped curve in squared prediction error as we vary $k$. Similarly, note that for linear regression on a $p<n$-dimensional subset of the input, the bias of the fit generally decreases in $p$, while the variance increases in $p$ as $\textstyle \frac{1}{n}\sum^n_{i=1}Var(\hat{s}(x_i))= \frac{p\sigma^2}{n}$ \cite{hastie2009elements} -- leading to another well-known example of a U-shaped generalization curve (in $p<n$). 

When the expected mean-squared error (MSE) is the loss function of interest, a bias-variance tradeoff can also be motivated more explicitly by noting that the MSE  can be decomposed as
\begin{equation}\label{eq:mse}
    \mathbb{E}[(Y - \hat{f}(x_0))^2|X=x_0]=\sigma^2 + \text{Bias}^2(\hat{f}, x_0)  + \text{Var}(\hat{f}, x_0)
\end{equation}

\subsection{On the impossibility of a second prediction error descent in the interpolation regime in \textit{fixed design} settings}\label{app:impossible} 
\cite{belkin2019reconciling} provide a discussion of possible cultural and practical reasons for the historical absence of double descent shapes in the statistics literature. They note that observing double descent in the number of parameters requires parametric families in which complexity can arbitrarily be controlled and that therefore the classical statistics literature -- which either focuses either on linear regression in fixed $p <n$ settings or considers highly flexible nonparametric classes but then applies heavy regularization --  could not have observed it because of this cultural focus on incompatible model classes (in which  \cite{belkin2019reconciling} argue double descent cannot emerge due to a lack of interpolation). Through our investigation, we find another fundamental reason for the historical absence of double descent shapes in statistics: even when using a sufficiently flexible parametric class (e.g. unregularized RFF linear regression), a second descent in the interpolation regime \textit{cannot} emerge in fixed design settings, the primary setting considered in the early statistics literature -- where much of the intuition regarding the bias-variance tradeoff was initially established. 

After recalling some definitions in \cref{app:cprelim}, we demonstrate this impossibility of a second descent in fixed design settings for general interpolating predictors in \cref{app:genimposs} below, and make some additional observations specific to smoothers in \cref{app:impsmooth}.

\subsubsection{Preliminaries.}\label{app:cprelim}  We begin by recalling the definition of the fixed design setting of statistical learning.
\begin{defi}[Fixed design setting]
    In the fixed design setting, the inputs $x_1, \ldots, x_n$ are \text{deterministic}, while their associated targets are random (e.g. $y_i = f(x_i) + \epsilon_i$ where $\epsilon_i$ is a random noise term). 
\end{defi}

The fixed design setting was a core consideration of much of the statistical literature \cite{rosset2019fixed} and was suitable for several traditional applications in e.g. agriculture. While many modern applications require a \textit{random design setting} in which the inputs are also random, several modern applications are still suitably modeled within this framework (e.g. image denoising). Next, we will define in-sample prediction error in line with \cite{hastie2009elements}.

\begin{defi}[In-sample prediction error]
    Given a set of fixed design points $x_1, \ldots, x_n$, training targets $y_1, \ldots, y_n$ and some model $f$, suppose we observe a new set of random responses at each of the training points denoted $y^0_i$. Then the expected in-sample prediction error for some loss function $\mathcal{L}$ is given by
    \begin{equation*}
        \text{ERR}_{\text{in}} = \frac{1}{n}\sum_{i=1}^n \mathbb{E}_{y^0}[\mathcal{L}(y^0_i, f(x_i))| \{(x_1, y_1), \ldots, (x_n, y_n)\}]
    \end{equation*}
\end{defi}
\textit{Remark:} Note that in the fixed design setting, the expected in-sample prediction error is indeed the natural quantity of interest at test time. 

Finally, we define an interpolating predictor below. 
\begin{defi}[Interpolating predictor.]\label{def:interpolator} A predictor $\hat{f}(\cdot)$ \textit{interpolates} 
the training data if for $y_i^\text{train} \in \mathbb{R}, \forall i \in \mathcal{I}_{\text{train}}$ 
\begin{equation}
    \hat{f}(x_i) = y_i^\text{train}
\end{equation}
\end{defi}

\subsubsection{Main result: General impossibility of a second descent in test error for the interpolation regime in fixed design settings}\label{app:genimposs}
We may now proceed to the main observation of this section. That is, in the fixed design setting a second descent in test error observed in the interpolation regime \textit{cannot} occur. Intuitively, this is because any improvement in fixed-design test-time performance by a more complex model would require sacrificing the perfect performance on the train set.
As a consequence, while -- as we observe in the main text -- different interpolating predictors can have \textit{wildly different (out-of-sample) generalization performance}, we show below that all models in the interpolation regime \textit{have identical expected in-sample prediction error} $\text{ERR}_{\text{in}}$. That is, increasing the raw number of parameters $p$ past $n$, e.g. in min-norm linear regression, does not and can not lead to a second descent in $\text{ERR}_{\text{in}}$. We express this more formally below.

\begin{prop}[Impossibility of a second error descent in the interpolation regime for fixed design settings.]\label{prop:impossiblity}
Suppose we observe inputs  $x_1, \ldots, x_n$ in a fixed design prediction problem, with associated random labels $y^{\text{train}}_1, \ldots, y^{\text{train}}_n$ in the training set. For all interpolating models $f$, test-time predictions will be identical and, therefore, a second descent in $\text{ERR}_{\text{in}}$ in this regime is impossible.

\end{prop}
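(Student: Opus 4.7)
The plan is to exploit two facts together: (i) in a fixed design setting the set of test inputs coincides exactly with the training inputs $x_1, \ldots, x_n$ (this is what ``in-sample'' means in $\text{ERR}_{\text{in}}$), and (ii) by \cref{def:interpolator}, any interpolating predictor $\hat{f}$ is \emph{pinned} at the training targets, i.e.\ $\hat{f}(x_i) = y_i^{\text{train}}$ for every $i \in \mathcal{I}_{\text{train}}$. Combining these two observations will immediately force all interpolating models to be indistinguishable at the (only) inputs where $\text{ERR}_{\text{in}}$ is evaluated.

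First I would fix two arbitrary interpolating predictors $\hat{f}_1, \hat{f}_2$ fit on the same training sample $\{(x_i, y_i^{\text{train}})\}_{i=1}^n$. By \cref{def:interpolator}, for every training input $x_i$ we have $\hat{f}_1(x_i) = y_i^{\text{train}} = \hat{f}_2(x_i)$. Since the in-sample test inputs are exactly these $x_i$, the functions $\hat{f}_1$ and $\hat{f}_2$ agree pointwise on the support over which $\text{ERR}_{\text{in}}$ is computed, even though they may differ wildly elsewhere. This is the entire content of the ``identical test-time predictions'' claim.

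Next I would push this through the definition of $\text{ERR}_{\text{in}}$. For \emph{any} loss $\mathcal{L}$ and any fresh sample of labels $y_i^0$ at the fixed inputs $x_i$, the summand $\mathcal{L}(y_i^0, \hat{f}(x_i))$ depends on $\hat{f}$ only through $\hat{f}(x_i) = y_i^{\text{train}}$. Taking the expectation over $y^0$ conditional on the training data and averaging over $i$ therefore yields the same value of $\text{ERR}_{\text{in}}$ regardless of which interpolating $\hat{f}$ we plugged in. In particular, increasing the raw parameter count $p$ past $n$ along any family of interpolating estimators (e.g.\ min-norm RFF linear regression with $P^\phi > n$) leaves $\text{ERR}_{\text{in}}$ constant, so no second descent in the interpolation regime can appear on this complexity axis.

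There is essentially no technical obstacle: the argument is a direct unpacking of the definitions of ``fixed design'', ``in-sample'', and ``interpolation''. The only place where care is warranted is emphasising the contrast with the random-design / out-of-sample setting studied in the rest of the paper: once the evaluation inputs are allowed to differ from the $x_i$, two interpolating predictors are no longer forced to agree, and the interesting behavior quantified by $p^{\text{test}}_{\hat{\mathbf{s}}}$ in \cref{sec:backtou} re-emerges. I would close the proof by pointing this out explicitly, so the reader sees why the historical focus on fixed-design, in-sample error in the classical statistics literature structurally precluded observing double descent, complementing the cultural explanation given by \cite{belkin2019reconciling}.
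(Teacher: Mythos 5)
Your argument is correct and is essentially identical to the paper's own proof: both rest on the observation that interpolation pins $\hat{f}(x_i)=y_i^{\text{train}}$ at every training input, and since the fixed-design evaluation points are exactly those inputs, $\text{ERR}_{\text{in}}$ reduces to $\frac{1}{n}\sum_i \mathbb{E}_{y^0}[\mathcal{L}(y^0_i, y_i^{\text{train}})]$, a quantity independent of the particular interpolating model. The only cosmetic difference is that you phrase it as two arbitrary interpolators agreeing pointwise, whereas the paper substitutes directly into the loss; the content is the same.
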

\begin{proof}
    By  definition of fixed design problems, our test set consists of examples in which the features are duplicates from the training set and the targets are new realizations from some target distribution. 
    Then the test dataset consists of the pairs $(x_1, y^{\text{test}}_1), \ldots, (x_n, y^{\text{test}}_n)$. For any interpolating model $f$ we can calculate the test loss as $\mathcal{L}^{\text{test}} = \sum_{i=1}^n\mathcal{L}(f(x_i),y^{\text{test}}_i)$. But by \cref{def:interpolator} (and because interpolating models achieve zero training error), we know that $f(x_i) = y^{\text{train}}_i$. Thus $\mathcal{L}^{\text{test}} = \sum_{i=1}^n\mathcal{L}(y^{\text{train}}_i,y^{\text{test}}_i)$, and since this is not a function of $f$, this quantity is independent of the number of parameters (or any other property) of $f$. Therefore, any change in $\mathcal{L}^{\text{test}}$ (or expectations thereof) in the interpolation regime is impossible - including the second descent necessary for creating a double descent shape.
\end{proof}

While most, if not all, discussion of the double descent phenomenon in the ML literature is not within the fixed design setting, we believe that the impossibility of double descent in this setting provides a new useful partial explanation for its historical absence from the statistical discourse.

\subsubsection{Additional observations specific to smoothers}\label{app:impsmooth}
For the special case of interpolating linear smoothers, we make an additional interesting observation with regard to their smoothing weights and implied effective number of parameters. To do so, first recall that a linear smoother is defined as follows:

\begin{defi}[Linear smoother]
    A smoother with  weights $\hat{\mathbf{s}}(x_0)\equiv \hat{\mathbf{s}}(x_0; \mathbf{X}_{\text{train}},\mathbf{y}_{\text{train}})$ is linear if  $\hat{\mathbf{s}}(x_0)$ does not depend on $\mathbf{y}_{\text{train}}$; that is $\hat{\mathbf{s}}(x_0; \mathbf{X}_{\text{train}},\mathbf{y}^1)=\hat{\mathbf{s}}(x_0; \mathbf{X}_{\text{train}},\mathbf{y}^2)$ for $\forall \mathbf{y}^1, \mathbf{y}^2 \in \mathbb{R}^n$
\end{defi}

We are now ready to show in Prop. \ref{prop:intsmooth} that \textit{all} interpolating linear smoothers must have the same smoother weights and effective parameters for training inputs. 

\begin{prop}[Interpolating linear smoother.]\label{prop:intsmooth}
    Any interpolating linear smoother must have identical smoother weights $\hat{\mathbf{s}}(x_i)=\mathbf{e}_i$ for all $i \in \mathcal{I}_{\text{train}}$. This immediately implies also that all interpolating linear smoothers use the same effective number of parameters $p_e=n$ in fixed design settings.
\end{prop}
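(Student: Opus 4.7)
The plan is to exploit the defining feature of a linear smoother — that its weight vector $\hat{\mathbf{s}}(x_0)$ depends on $\mathbf{X}_{\text{train}}$ but not on $\mathbf{y}_{\text{train}}$ — together with the interpolation requirement to pin down the smoother weights exactly. Since a linear smoother produces a prediction function via a weight vector that is already fixed before any target values are plugged in, the interpolation condition $\hat{\mathbf{s}}(x_j)\,\mathbf{y} = y_j$ must hold for every $\mathbf{y} \in \mathbb{R}^n$, not merely for one particular observed training target vector. Once this is established, the entire conclusion follows from elementary linear algebra.

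First, I would fix $j \in \mathcal{I}_{\text{train}}$ and note that by interpolation combined with linearity,
\begin{equation*}
\hat{\mathbf{s}}(x_j)\,\mathbf{y} \;=\; \mathbf{e}_j^\top \mathbf{y} \qquad \text{for all } \mathbf{y} \in \mathbb{R}^n.
\end{equation*}
Evaluating both sides at each standard basis vector $\mathbf{y} = \mathbf{e}_k$ immediately gives $\hat{s}^k(x_j) = \delta_{jk}$, so $\hat{\mathbf{s}}(x_j) = \mathbf{e}_j$, as claimed. Since $j$ was arbitrary, the same holds for every training input.

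Second, I would plug this into \cref{def:releff} applied to $\mathcal{I}_0 = \mathcal{I}_{\text{train}}$, which collapses the generalized effective parameter count to the original quantity $p_e = p^{\text{train}}_{\hat{\mathbf{s}}}$. Since $\|\mathbf{e}_i\|^2 = 1$ for every $i$, we get
\begin{equation*}
p_e \;=\; \sum_{i \in \mathcal{I}_{\text{train}}} \|\hat{\mathbf{s}}(x_i)\|^2 \;=\; \sum_{i=1}^n 1 \;=\; n,
\end{equation*}
and this value is identical across all interpolating linear smoothers because their training-input weights coincide.

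The only subtle point — and the one I would take care to articulate — is the justification for requiring $\hat{\mathbf{s}}(x_j)\,\mathbf{y} = y_j$ to hold across all $\mathbf{y}$ rather than only for the single observed target vector. The argument is that the defining property of a linear smoother cleanly separates the weight-determination step (depending only on $\mathbf{X}_{\text{train}}$) from the prediction step (a linear combination with $\mathbf{y}_{\text{train}}$); since the weights cannot be retroactively tuned to the particular $\mathbf{y}_{\text{train}}$ encountered, if the procedure is to interpolate its training data at all, it must do so for any target vector it might be handed. With that clarified, there is no further obstacle — the identification $\hat{\mathbf{s}}(x_j) = \mathbf{e}_j$ is essentially unique, and the effective-parameter count follows by direct substitution.
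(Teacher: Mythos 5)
Your proof is correct and takes essentially the same route as the paper: both arguments use that a linear smoother's weights are fixed independently of $\mathbf{y}_{\text{train}}$, so interpolation must hold for \emph{every} target vector, which forces $\hat{\mathbf{s}}(x_i)=\mathbf{e}_i$, and then $p_e=\sum_i\|\mathbf{e}_i\|^2=n$ follows by direct substitution. Your explicit evaluation at the standard basis vectors merely spells out the step the paper states as "we must have," so there is nothing further to add.
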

\begin{proof}
    In linear smoothers, $\hat{\mathbf{s}}(x_i)$ can not depend on $\mathbf{y}_{\text{train}}$ by definition. To have $\hat{\mathbf{s}}(x_i)\mathbf{y}'_{\text{train}}= y'_i$ for all  $i \in \mathcal{I}_{\text{train}}$ and \textit{any possible} training outcome realization  $\forall \mathbf{y}'_{\text{train}} \in \mathbb{R}^n$, we must have $\hat{\mathbf{s}}(x_i)=\mathbf{e}_i$ for $i \in \mathcal{I}_{\text{train}}$. 

     From this, as $||\mathbf{e}_i||^2=1$, it also follows that  $\textstyle p^{\text{train}}_{\hat{\mathbf{s}}}     = p_e = \sum^n_{i=1} ||\mathbf{e}_i||^2 = n$. In fixed design settings, training and testing inputs are identical so $p^{\text{test}}_{\hat{\mathbf{s}}}=n$.
\end{proof}

Note that this specific property does not hold for more general (non-linear) interpolating smoothers:  whenever $\mathbf{y}_{train}$ contains duplicates, a non-linear interpolating smoother \textit{can} have  $\hat{\mathbf{s}}(x_i) \neq \mathbf{e}_i$ for some $i \in \mathcal{I}_{\text{train}}$. When all $y_i$ observed at train-time are unique, then it is easy to see that even non-linear interpolating smoothers must have $\hat{\mathbf{s}}(x_i) = \mathbf{e}_i$ for all $i \in \mathcal{I}_{\text{train}}$.

\subsection{Derivation of smoothing weights of linear regression, trees and boosting}\label{app:deriveds}

\subsubsection{Smoothing weights for RFF linear regression}
Derivation of the smoothing weights for the RFF linear regression example is trivial, as linear regression is a textbook example of a smoother \cite{hastie1990GAM}. To see this, let $\mathbf{\phi}(x_0)=(\phi_1(x_0), \ldots, \phi_P(x_0))$ denote the $P^\phi \times 1$ row-vector of RFF-projections of \textit{arbitrary} admissible $x_0\in \mathbb{R}^d$ and let $\mathbf{\Phi}_{\text{train}}=[\mathbf{\phi}(x_1)^T, \ldots, \mathbf{\phi}(x_n)^T]^T$ denote the $n \times P^\phi$ design matrix of the training set and note that the least squares prediction obtained through the min-norm solution can be written as
\begin{equation}\label{eq:s-linear}
    \hat{f}^{LR}(x_0) = \phi(x_0) \beta_{LR} = \hat{s}^{LR}(x_0)\mathbf{y}_{\text{train}} = \begin{cases}
    \mathbf{\phi}(x_0)(\mathbf{\Phi}_{\text{train}}^T\mathbf{\Phi}_{\text{train}})^{-1}\mathbf{\Phi}_{\text{train}} ^T \mathbf{y}_{\text{train}}& \text{if }  P^\phi < n\\
        \mathbf{\phi}(x_0) \mathbf{\Phi}_{\text{train}}^T(\mathbf{\Phi}_{\text{train}} \mathbf{\Phi}_{\text{train}} ^T)^{-1} \mathbf{y}_{\text{train}} & \text{if } P^\phi \geq n
    \end{cases}
\end{equation}
i.e. 
\begin{equation}
    \hat{\mathbf{s}}^{LR}(x_0) = \begin{cases}
    \mathbf{\phi}(x_0)(\mathbf{\Phi}_{\text{train}}^T\mathbf{\Phi}_{\text{train}})^{-1}\mathbf{\Phi}_{\text{train}} ^T & \text{if }  P^\phi < n\\
        \mathbf{\phi}(x_0) \mathbf{\Phi}_{\text{train}}^T(\mathbf{\Phi}_{\text{train}} \mathbf{\Phi}_{\text{train}} ^T)^{-1} & \text{if } P^\phi \geq n
            \end{cases}
\end{equation}
Note also that by design, as alluded to in \cref{app:impsmooth}, when $P^\phi \geq n$ we have $\mathbf{\Phi}_{\text{train}}\mathbf{\Phi}_{\text{train}}^T(\mathbf{\Phi}_{\text{train}} \mathbf{\Phi}_{\text{train}} ^T)^{-1}=\mathbf{I}_n$, implying that that $\hat{\mathbf{s}}^{LR}(x_i)=\mathbf{e}_i$ for any $x_i$ observed at train time.

\subsubsection{Smoothing weights for trees and forests}
Similarly, we can exploit that trees are sometimes considered \textit{adaptive nearest neighbor methods} \cite[Ch. 15.4.3]{hastie2009elements}, i.e. nearest neighbor estimators with an 
adaptively constructed (outcome-oriented) kernel (or distance measure). To see this, note that regression trees, that is trees trained to minimize the squared loss and/or classification trees that make predictions through averaging (not voting), can all be understood as issuing predictions that simply average across training instances within a terminal leaf. To make this more formal, consider that any tree with $P^{leaf}$ leaves can be represented by $P^{leaf}$ contiguous axis-aligned hypercubes $l_p$, which store the mean outcome of all training examples that fall into this leaf: If we denote by $l(x_0)$ the leaf that $x_0$ falls into and $n_{l(x_0)}=\sum^n_{i=1} \bm{1}\{l(x_0)=l(x_i)\}$ the number of training examples in this leaf, then we have that
\begin{equation}
  \hat{f}(x_0)=\frac{1}{n_{l(x_0)}}\sum^n_{i=1}\bm{1}\{l(x_0)=l(x_i)\}y_i  
\end{equation}
 giving 
 \begin{equation}\label{eq:s_tree}
     \hat{\mathbf{s}}^{tree}(x_0) = \left(\frac{\bm{1}\{l(x_0)=l(x_1)\}}{n_{l(x_0)}}, \ldots, \frac{\bm{1}\{l(x_0)=l(x_1)\}}{n_{l(x_0)}}\right)
 \end{equation}

Further, we note that sums $\hat{f}_{\Sigma}(x_0)= \sum^M_{m=1} \hat{f}^m(x_0)$ of smoothers $\hat{f}^m(x_0)=\hat{s}^m(x_0)\mathbf{y}_{\text{train}}$, $m=1, \ldots, M$ are also smoothers with $\hat{s}^\Sigma(x_0) =\sum^M_{m=1} \hat{s}^m(x_0)$. This immediately implies that the tree ensembles considered in \cref{sec:decisiontree} are also smoothers, and, as we show in \cref{app:boostingweights} below, so are boosted trees (and ensembles thereof) as considered in \cref{sec:boosting}.

\textbf{Remark:} We note that, although the smooth $\hat{s}^{tree}(x_0)$ can be \textit{written} in linear form without explicit reference to $\mathbf{y}_{\text{train}}$, trees (and their sums) are not truly linear smoothers because the smoother  $\hat{s}^{tree}(\cdot)$ does depend in its construction on $\mathbf{y}_{\text{train}}$, reflecting their epithet as \textit{adaptive smoothers} \cite[Ch. 15.4.3]{hastie2009elements}. This mainly implies that when calculating bias and variance, we can no longer consider $\hat{\mathbf{s}}(x_0)$ as fixed and \cref{eq:bias,eq:var} no longer hold exactly. However, this is not an issue for our purposes as we are not interested in actually estimating bias and variance in this work,  rather we are interested in smoothers because -- as discussed in Sec. 4 and \cref{app:effp} -- they allow us to compute the most useful measure of effective parameters. Conceptually, this allows us to assess the \textit{effective amount} of smoothing used, which can also be interpreted relative to e.g. a standard kNN estimator. Because for kNN estimators we have \smash{$\textstyle p^{0}_{\hat{\mathbf{s}}}=\frac{n}{k}$} and for trees and forests it holds that \smash{$\textstyle \sum^n_{i=1} \hat{{s}}^i(x_0)=1$}, the quantity \smash{$\textstyle \tilde{k}^{0}_{\hat{\mathbf{s}}}=\frac{n}{p^{0}_{\hat{\mathbf{s}}}}$}  admits an interesting interpretation as measuring  the \textit{effective number of nearest neighbors} examples have in a tree or forest (and this interpretation does \textit{not} require fixed ${\hat{\mathbf{s}}}$).

\subsubsection{Smoothing weights for boosting}\label{app:boostingweights}
Deriving the smoothing weights for gradient boosting is more involved. To do so, we first re-state the gradient boosting algorithm (as presented in \cite[Ch. 10.10]{hastie2009elements}) and then recursively construct weights from it. 

\paragraph{
The gradient boosted trees algorithm.} We consider gradient boosting with learning rate $\eta$ and the squared loss as $L(\cdot, \cdot)$:

\begin{enumerate}
    \item Initialize $f_0(x)=0$
    \item For $p \in \{1, \ldots, P^{boost}\}$
    \begin{enumerate}[(a)] 
        \item For $i \in \{1, \ldots, n\}$ compute 
        \begin{equation}
            g_{i, p} = - \left[\frac{\partial L(y_i, f(x_i))}{\partial f(x_i)}\right]_{f=f_{p-1}}
        \end{equation}
        \item Fit a regression tree to $\{(x_i,  g_{i, p})\}^n_{i=1}$, giving leaves $l_{jp}$ for $j=1, \ldots, J_p$
        \item Compute optimal predictions for each leaf $j \in \{1, \ldots, J_p\}$:
        \begin{equation}\label{eq:boostpred}
            \gamma_{jp} = \arg \min_{\gamma \in \mathbb{R}} \sum_{x_i \in l_{jp}} L(y_i, f_{p-1}(x_i) + \gamma) = \frac{1}{n_{l_{jp}}}  \sum_{x_i \in l_{jp}} (y_i -  f_{p-1}(x_i))
        \end{equation}
        \item Denote by $\tilde{f}_{p}(x)=\sum^{J_p}_{j=1} \bm{1}\{x \in l_{jp}\} \gamma_{jp}$ the predictions of the tree built in this fashion
        \item Set $f_p(x)=f_{p-1}(x) + \eta \tilde{f}_{p}(x)$
    \end{enumerate}
    \item Output $f(x)=f_{P^{boost}}(x)$
\end{enumerate}

\paragraph{Recursive construction of smoothing weights.} \Cref{eq:boostpred} highlights that we can \textit{recursively} construct smoothing weights similarly as for trees. In fact, note that in the first boosting round, $\tilde{f}_{1}(x)$ is a standard tree with smoothing weights $ \hat{\mathbf{s}}^{tree, \tilde{f}_{1}}(\cdot)$ computed as in \cref{eq:s_tree}, so that $f_1(x)=\eta\tilde{f}_{1}(x)$ has smoothing weights $\hat{\mathbf{s}}^{boost, 1}(\cdot)=\eta \hat{\mathbf{s}}^{tree, \tilde{f}_{1}}(\cdot)$. Now we can recursively consider the smoothing weights associated with newly added residual trees $\tilde{f}_{p}(x)$ by realizing that its predictions can be written as a function of standard tree weights $\hat{\mathbf{s}}^{tree, \tilde{f}_{p}}(\cdot)$ and a correction depending on $\hat{\mathbf{s}}^{boost, 1}(\cdot)$ alone :
\begin{align}
    \tilde{f}_{p}(x)=\sum^{J_p}_{j=1} \bm{1}\{x \in l_{jp}\}  \gamma_{jp} = \sum^{J_p}_{j=1} \bm{1}\{x \in l_{jp}\} \left(\frac{1}{n_{l_{jp}}}  \sum_{x_i \in l_{jp}} (y_i -  f_{p-1}(x_i))\right) \nonumber \\ =  \underbrace{\sum^{J_p}_{j=1} \bm{1}\{x \in l_{jp}\} \left(\frac{1}{n_{l_{jp}}}  \sum_{x_i \in l_{jp}} y_i\right)}_{\text{Standard tree prediction: } \hat{\mathbf{s}}^{tree, \tilde{f}_{p}}(x) \mathbf{y}_{train}} -  \underbrace{\sum^{J_p}_{j=1} \bm{1}\{x \in l_{jp}\} \left(\frac{1}{n_{l_{jp}}} \sum_{x_i \in l_{jp}} \hat{\mathbf{s}}^{boost, p-1}(x_i)\mathbf{y}_{train}\right)}_{\text{Residual correction: } \hat{\mathbf{s}}^{corr, f_{p}}(x) \mathbf{y}_{train}}\\
    = (\hat{\mathbf{s}}^{tree, \tilde{f}_{p}}(x) - \hat{\mathbf{s}}^{corr, f_{p-1}}(x)) \mathbf{y}_{train}
\end{align}
where, using the $1 \times J_p$ indicator vector $\mathbf{e}_{l^{p}(x)}=(\bm{1}\{x \in l_{1}\}, \ldots, \bm{1}\{x \in l_{J_p}\})$, we have
\begin{equation}
  \textstyle  \hat{\mathbf{s}}^{corr, f_{p}}(x) = \mathbf{e}_{l^{p}(x)} \mathbf{\hat{R}}^{p} 
\end{equation}
with $\mathbf{\hat{R}}^{ p}$ the $J_p \times n$ leaf-residual correction matrix with  $j-$th row given by
\begin{equation}
 \mathbf{\hat{R}}^{p} = \frac{1}{n_{l_{jp}}} \sum_{x_i \in l_{jp}} \hat{\mathbf{s}}^{boost, p-1}(x_i)
\end{equation}

Thus, we have constructed the smoothing matrix for gradient boosting recursively as:
\begin{equation}
    \hat{\mathbf{s}}^{boost, p}(\cdot) = \hat{\mathbf{s}}^{boost, p-1}(\cdot) + \eta \left(\hat{\mathbf{s}}^{tree, \tilde{f}_{p}}(\cdot) - \hat{\mathbf{s}}^{corr, f_{p}}(\cdot) \right)
\end{equation}

\section{More on effective parameters}\label{app:effp}

In what follows, we discuss the different effective parameter definitions considered in the smoothing literature, and follow closely the presentation in \cite[Ch.3.4 \& 3.5]{hastie1990GAM} and \cite[Ch. 7.5]{hastie2009elements}. Recall that, as discussed in \cref{sec:complexity} of the main text, these definitions are all naturally \textit{calibrated} towards linear regression so that, as we might desire, effective and raw parameter numbers are equal in the case of ordinary linear regression with $p<n$.

To understand their motivation, we introduce some further notation. Assume that  outcomes are generated as $y=f^*(x)+ \epsilon$ 
 with unknown expectation $f^*(x)$ and $\epsilon$ a zero-mean error term with variance  $\sigma^2$. Let $\mathbf{f}^*=[f^*(x_1), \ldots, f^*(x_n)]^T$ denote the vector of true expectations.  Further, let $\hat{\mathbf{S}}=[\hat{\mathbf{s}}(x_1)^T, \ldots,\hat{\mathbf{s}}(x_n)^T]^T$ denote the smoother matrix for all input points, so that we can write the vector of in-sample predictions as $\hat{\mathbf{f}} = \hat{\mathbf{S}} \mathbf{y}_{\text{train}}$. For linear smoothers (and/or fixed $\hat{\mathbf{S}}$), bias and variance at any input point are given by \cref{eq:bias,eq:var}, so that we have $\frac{1}{n}\sum_{i \in \mathcal{I}_{train}} bias^2(\hat{f}(x_i)) = \frac{1}{n}\mathbf{b}_{\hat{\mathbf{S}}}^T\mathbf{b}_{\hat{\mathbf{S}}}$ for bias vector $\mathbf{b}_{\hat{\mathbf{S}}}=(\mathbf{f}^* - \hat{\mathbf{S}}\mathbf{f}^*)$ and $\frac{1}{n}\sum_{i \in \mathcal{I}_{train}}  var(\hat{f}(x_i)) = \frac{\sigma^2}{n} tr(\hat{\mathbf{S}}\hat{\mathbf{S}}^T)$.

\textbf{Covariance-based effective parameter definition.} In this setup, one way of defining the effective number of parameters that is very commonly used is to consider the covariance between predictions $\hat{f}(x_i)$ and the observed training label $y_i$ \cite{hastie2009elements}:
\begin{equation}
    p_e^{cov} = \frac{\sum^n_{i=1} cov(y_i, \hat{f}(x_i))}{\sigma^2} = tr(\hat{\mathbf{S}})
\end{equation}
As expected, $tr(\hat{\mathbf{S}})=p$ in ordinary linear regression  ($p<n$).

\textbf{Error-based  effective parameter definition.} Another way considers the residual sum of squares (RSS) in the training sample $RSS(\hat{\mathbf{f}})=\sum^n_{i=1}(y_i-\hat{f}(x_i))^2 = (\mathbf{y}_{train} - \hat{\mathbf{S}}\mathbf{y}_{train})^T(\mathbf{y}_{train} - \hat{\mathbf{S}}\mathbf{y}_{train})$ and notes that it has expected value:
\begin{equation}
    \mathbb{E}[RSS(\hat{\mathbf{f}})] = (n-tr(2\hat{\mathbf{S}} -\hat{\mathbf{S}}\hat{\mathbf{S}}^T))\sigma^2 + \mathbf{b}_{\hat{\mathbf{S}}}^T\mathbf{b}_{\hat{\mathbf{S}}} 
\end{equation}
and uses
\begin{equation}
   n - p_e^{err} = n-tr(2\hat{\mathbf{S}} -\hat{\mathbf{S}}\hat{\mathbf{S}}^T)
\end{equation}
because in the linear regression case the degrees of freedom for error are $n-p$ \cite{hastie1990GAM} and $tr(2\hat{\mathbf{S}} -\hat{\mathbf{S}}\hat{\mathbf{S}}^T)=p$.

\textbf{Variance-based  effective parameter definition.} Finally, the variance-based definition $p_e$ discussed in the main text is motivated from $\sum^n_{i=1}var(\hat{f}(x_i))=\sigma^2\sum^n_{i=1}||\hat{\mathbf{s}}(x_i)||^2= \sigma^2tr(\hat{\mathbf{S}}\hat{\mathbf{S}}^T)$ giving 
\begin{equation}
    p_e \equiv  p_e^{var} = tr(\hat{\mathbf{S}}\hat{\mathbf{S}}^T)
\end{equation}
as $\sum^n_{i=1}var(\hat{f}(x_i)=p\sigma^2$ in linear regression. 

Note that, in ordinary linear regression with $p<n$ and full rank design matrix, $\hat{\mathbf{S}}$ is idempotent so that $tr(\hat{\mathbf{S}}\hat{\mathbf{S}}^T)=tr(\hat{\mathbf{S}})=tr(2\hat{\mathbf{S}} -\hat{\mathbf{S}}\hat{\mathbf{S}}^T)=rank(\hat{\mathbf{S}})=p$. 

\textbf{Why did we choose $ p_e^{var}$?} We chose to adapt  $ p_e^{var}$ because it is the definition that can most obviously be adapted to out-of-sample settings: both the error-based definition and the covariance-based definition focus on the effect a training outcome has in \textit{predicting itself} (i.e. overfitting on the own label), while for out-of-sample prediction, this is not of interest. Instead, we therefore rely on the variance-based definition because $||\hat{\mathbf{s}}(x_0)||^2$ can not only be computed for every input $x_0$ but also admits a meaningful interpretation as measuring the amount of smoothing over the different training examples: for an averaging smoother with $\textstyle \sum^n_{i=1} \hat{\mathbf{s}}^i(x_0)=1$, the simplest and smoothest prediction is a sample average with $\hat{\mathbf{s}}^i(x_0)=\frac{1}{n}$ -- which minimizes $||\hat{\mathbf{s}}(x_0)||^2=\sum^n_{i=1}\frac{1}{n^2}=\frac{1}{n}$. Conversely, the least smooth weights assign all weight to a single training observation $j$, i.e. $\hat{\mathbf{s}}(x_0)=\mathbf{e}_j$ so that $||\hat{\mathbf{s}}(x_0)||^2=1$ is maximized.

\textbf{Effective parameters for methods that cannot be written in linear form.}
The effective parameter definitions considered above all have in common that they require a way of writing predictions $\hat{y}$ as an explicit function of $\mathbf{y}_{train}$ -- which is possible for the three ML methods considered in this paper, but not generally the case. Instead, for loss-based methods like neural networks, \cite[Ch. 7.7]{hastie2009elements} note that one can make a quadratic approximation to the error function $R(\mathbf{w})$ and get 
\begin{equation}\label{eq:reg-effp}
    p^\alpha_e = \sum^p_{j=1}\frac{\theta_j}{\theta_j+\alpha}
\end{equation}
where $\theta_p$ are the eigenvalues of the Hessian $\partial^2R(\mathbf{w})/\partial \mathbf{w} \partial \mathbf{w}^T$ and $\alpha$ is the weight-decay penalty used. This proxy effective parameters, motivated in a Bayesian context \cite{mackay1991bayesian}, is used in \cite{maddox2020rethinking}'s study of effective parameters used in overparameterized neural networks. We note that, in addition to not being applicable to our tree-based experiments (which do not possess a differentiable loss function), this effective parameter proxy is also usually motivated in the context where \textit{in-sample} prediction error is of interest \cite{moody1991effective} -- and might therefore also need updating to reflect the modern focus on generalization to unseen inputs. This is definitely the case in the context of linear regression: to make \cref{eq:reg-effp} applicable to the unregularized/ridgeless regression setup we consider, one needs to let $\alpha$ tend to $0$. In this case, the Hessian is $\mathbf{X}'\mathbf{X}$  and we will have that $p^\alpha_e \rightarrow \sum^p_{j=1} \mathbf{1}\{\theta_j>0\}=rank(\mathbf{X}'\mathbf{X}) = rank(\mathbf{X})=min(n,p)$ -- which will be constant at value $n$ once $p>n$, just like $p^{\text{train}}_{\hat{\mathbf{s}}}$.

\textbf{Other related concepts.} \cite{derezinski2020exact} theoretically study implicit regularization of min-norm solutions by computing effective ridge penalties $\lambda_n$ implied by min-norm solutions by solving $n=tr(\Sigma_\mu(\Sigma_\mu + \lambda_n I_n)^{-1})$ with $\Sigma_\mu=E[XX^T]$.  Further, building on the principle of minimum description length (MDL), \cite{dwivedi2020revisiting} define a complexity measure in the context of linear regression via an optimality criterion over the encodings induced by a good Ridge estimator class.

\section{Additional results}\label{app:results}
In this section, we first provide further discussion of the experimental setup in \cref{app:details} and then present additional results: In \cref{app:peak} we show that the location of the second descent can be arbitrarily controlled for all methods under consideration, in \cref{app:binaryres} we plot binary test error, in \cref{app:trainloss} we plot training error, in \cref{app:modelsel} we provide anecdotal evidence that $p^{\text{test}}_{\hat{s}}$ could be used for model selection and in \cref{app:otherdata} we finally present results using further datasets.
\subsection{Experimental setup}\label{app:details}
Our experimental setup largely replicates that of \cite{belkin2019reconciling}. We describe the datasets, computation, and each experiment in detail in this section. Code is provided at \url{https://github.com/alanjeffares/not-double-descent}. 

\textbf{Datasets} - We perform our experiments in the main text on the MNIST image recognition dataset \cite{lecun1998gradient} with inputs of dimension 784. In this section, we also repeat these experiments on the SVHN digit recognition task \cite{netzer2011reading} as well as the CIFAR-10 object classification tasks \cite{krizhevsky2009learning}, both containing inputs of dimension 1024 as color images have been converted to grayscale. All inputs are normalized to lie in the range $[0, 1]$. All three tasks are 10 class classification problems. Furthermore, we randomly sample a subset of 10,000 training examples and use the full test set in our evaluation. In SVHN we randomly sample a balanced test set of 10,000 examples. We note that the less common choice of using squared error for this classification problem in \cite{belkin2019reconciling} is supported by the same authors' work in \cite{hui2020evaluation, muthukumar2021classification} on the utility of the squared loss for classification.

\textbf{Compute} - All experiments are performed on an Azure FX4mds. This machine runs on an Intel Xeon Gold 6246R (Cascade Lake) processor with 84 GiB of memory. The majority of the computational cost was due to matrix inversions of a large data matrix in the case of linear regression experiments and fitting a large number of deep trees in the tree-based experiments. Each of the three experiments had a total runtime in the order of hours. 

\textbf{Trees  (\cref{sec:decisiontree})} - We use regression trees and random forests as implemented in scikit-learn \cite{scikit-learn} for our tree experiments, which in turn implement an optimized version of the CART algorithm \cite{BreiFrieStonOlsh84}. As in \cite{belkin2019reconciling}, we disable bootstrapping throughout and consider random feature subsets of size $\sqrt{d}$ at each split. Like \cite{belkin2019reconciling} rely on 10 one-vs-all models due to the multi-class nature of the datasets.

\textbf{Gradient Boosting  (\cref{sec:boosting})} - Gradient boosting is implemented as described in \cite[Ch. 10.10]{hastie2009elements} and restated in \cref{app:boostingweights}, using trees with $P^{leaf}=10$ and a random feature subset of size $\sqrt{d}$ considered at each split, and learning rate $\gamma=.85$ as used in \cite{belkin2019reconciling}. We rely on the scikit-learn \cite{scikit-learn} implementation \texttt{GradientBoostingRegressor} to make use of the squared loss function, and like \cite{belkin2019reconciling} rely on 10 one-vs-all models to capture the multi-class nature of the datasets. We create ensembles of $P^{ens}$ different boosted models by initialing them with different seeds.

\textbf{Linear regression  (\cref{sec:RFFregression})} - As described in the main text we begin with a random Fourier Features (RFF) basis expansion. Specifically, given input $\textstyle \mathbf{x} \in \mathbb{R}^d$, the number of raw model parameters $P^{\phi}$ is controlled by randomly generating features \smash{$\textstyle{\phi_p(\mathbf{x}) = \text{Re}(\exp^{\sqrt{-1}\mathbf{v}_p^T \mathbf{x}})}$} for all $p \leq P^{\phi}$, where each \smash{$\textstyle \mathbf{v}_p \overset{\mathrm{iid}}{\sim} \mathcal{N}(\mathbf{0}, \frac{1}{5^2} \cdot \mathbf{I}_d)$}. For any given number of features $P^{\phi}$, these are stacked to give a $n \times P^{\phi}$ dimensional random design matrix $\mathbf{\Phi}$, which is then used to solve the regression problem \smash{$\mathbf{y}=\mathbf{\Phi} \bm{\beta}$} by least squares. Following the methodology of \cite{belkin2019reconciling} we perform one-vs-all regression for each class resulting in 10 regression problems. For principal component regression we standardize the design matrix, apply PCA, add a bias term, and finally perform ordinary least squares regression.

\subsection{``Peak-moving'' experiments}\label{app:peak}
In this section, we show that the first peak in generalization error can be arbitrarily (re)moved by changing \textit{when}, i.e. at which value of $P^{leaf}$, $P^{boost}$ or $P^{PC}$, we transition between the two mechanisms for increasing the raw number of parameters in each of the three methods. We do so for two reasons: First, as can be seen in \cref{fig:allpeak-test}, it allows us to highlight that the switch between the two parameter-increasing mechanisms is indeed precisely the cause of the second descent in generalization error -- in all three methods considered. Second, this also implies that the second descent itself is \textit{not inherently caused by interpolation}: to the contrary, comparing the generalization curves in \cref{fig:allpeak-test} with their train-error trajectories in \cref{fig:allpeak-train}, it becomes clear that such a second descent can also occur in models that have not yet and will not reach zero training error. Thus, the interpolation threshold does not itself cause the second descent -- rather, it often coincides with a switch between parameter increasing mechanisms because parameters on the $P^{leaf}$, $P^{boost}$ and $P^{PC}$ axis can inherently not be further increased once interpolation is achieved.

\begin{figure}[!h]
    \centering
    \includegraphics[width=.9\textwidth]{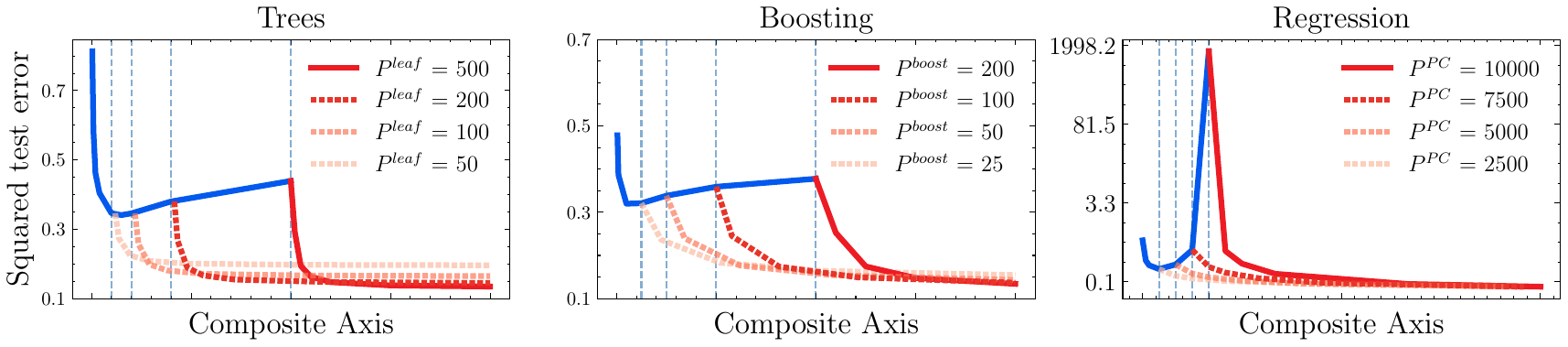}
    \caption{\textbf{Shifting the location of the double descent peak in generalization error by changing \textit{when} we switch between the two parameter-increasing mechanisms.} \small Transitioning from $P^{leaf}$ to $P^{ens}$ at different values of $P^{leaf}$ in the tree experiments (left), transitioning from $P^{boost}$ to $P^{ens}$ at different values of $P^{boost}$ in the boosting experiments (middle) and transitioning from $P^{PC}$ to $P^{ex}$ at different values of $P^{PC}$ in the regression experiments (right).}
    \label{fig:allpeak-test}
\end{figure}

\begin{figure}[!h]
    \centering
    \includegraphics[width=.9\textwidth]{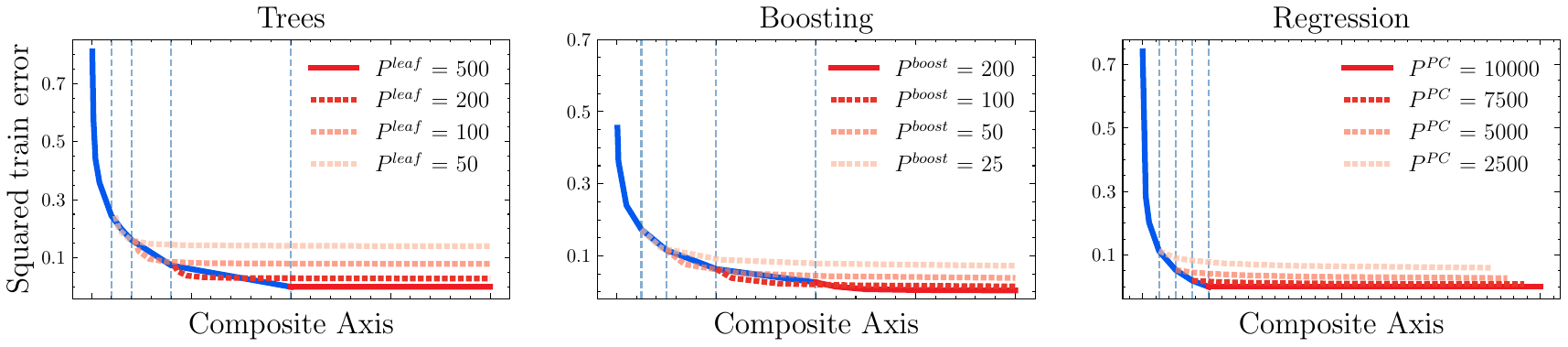}
    \caption{\textbf{Evolution of the training error for different transitions between the two parameter-increasing mechanisms.} \small Transitioning from $P^{leaf}$ to $P^{ens}$ at different values of $P^{leaf}$ in the tree experiments (left), transitioning from $P^{boost}$ to $P^{ens}$ at different values of $P^{boost}$ in the boosting experiments (middle) and transitioning from $P^{PC}$ to $P^{ex}$ at different values of $P^{PC}$ in the regression experiments (right).}
    \label{fig:allpeak-train}
\end{figure}

Finally, we note that one could also create a generalization curve with arbitrarily many peaks and arbitrary peak locations (including peaks at raw parameter counts $p>n$) by switching between parameter-increasing mechanisms \textit{more than once}. In \cref{fig:peak-design}, we show this for the linear regression example, where we switch between increasing parameters through $P^{PC}$ and $P^{ex}$ multiple times. This experiment is inspired by \cite{chen2021multiple}, who show  one can ``design your own generalization curve'' in linear regression by controlling the data-generating process through the order by which newly (un)informative features are revealed. In our case, rather than inducing the change by changing the data, we manipulate the location of the peak by simply changing the sequence in which parameter counts are increased.

\begin{figure}[h]
    \centering
    \includegraphics[width=.9\textwidth]{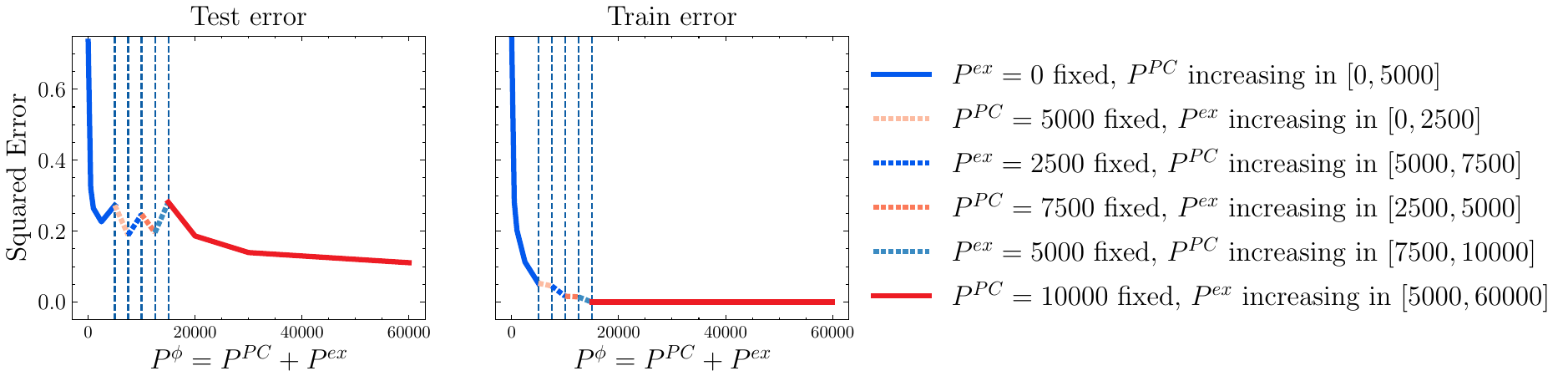}
    \caption{\textbf{``Design your own generalization curve''.} \small We show using the regression example that a generalization curve with arbitrarily many peaks and arbitrary peak locations (including peaks at $P^\phi>n$) can be created simply by switching between increasing parameters through $P^{PC}$ and $P^{ex}$ multiple times (left). Train error remains monotonically decreasing (right).}
    \label{fig:peak-design}
\end{figure}

\subsection{Plots of binary test error}\label{app:binaryres}
In \cref{fig:binarytree,fig:binaryboost,fig:binaryreg}, we revisit \cref{fig:trees,fig:gb,fig:RFF_linear_regression} of the main text by plotting 0-1 (or binary) loss achieved on the test-set -- measuring whether the class with the largest predicted probability indeed is the true class -- instead of summed squared loss across classes as in the main text. We observe qualitatively identical behavior in all experiments also for this binary loss. 

\begin{figure}[!h]
    \centering
    \includegraphics[width=.9\textwidth]{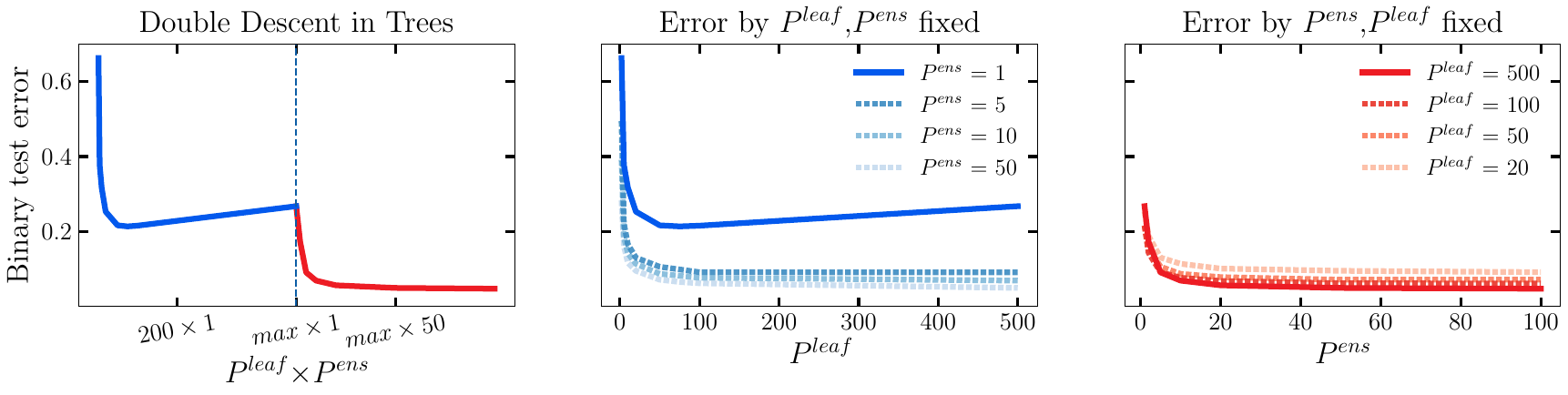}
    \caption{\textbf{Decomposing double descent in binary test error for trees.} \small Reproducing the tree experiments of \cite{belkin2019reconciling} (left). Test error by $P^{leaf}$ for fixed $P^{ens}$ (center).  Test error by $P^{ens}$ for fixed $P^{leaf}$ (right).}
    \label{fig:binarytree}
\end{figure}

\begin{figure}[!h]
    \centering
    \includegraphics[width=.9\textwidth]{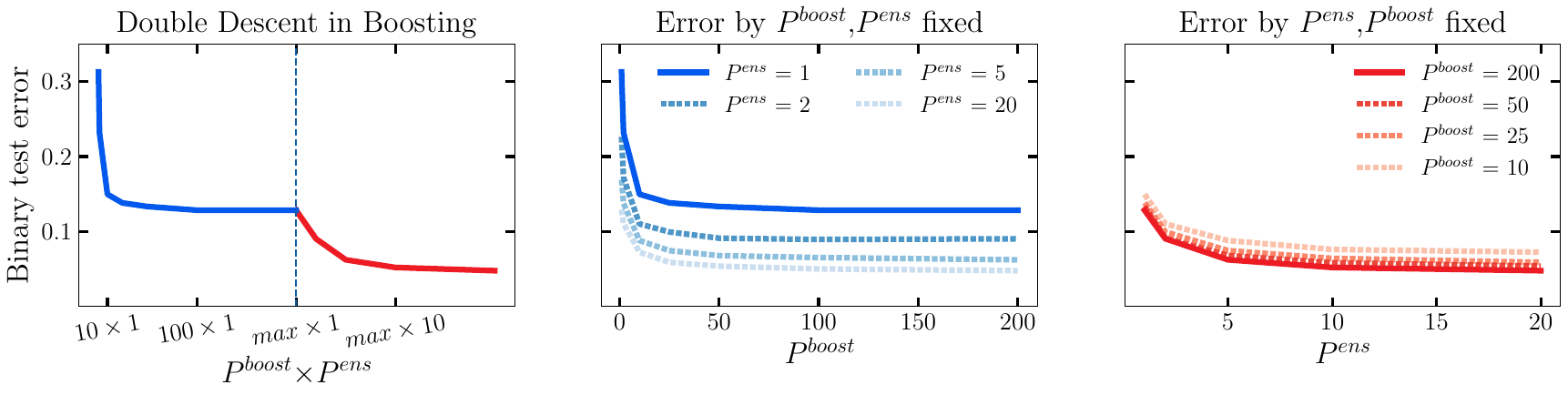}
    \caption{\textbf{Decomposing double descent in binary test error for boosting.} \small Reproducing the boosting experiments of \cite{belkin2019reconciling} (left). Test error by $P^{boost}$ for fixed $P^{ens}$ (center).  Test error by $P^{ens}$ for fixed $P^{boost}$ (right).}
    \label{fig:binaryboost}
\end{figure}

\begin{figure}[!h]
    \centering
    \includegraphics[width=.9\textwidth]{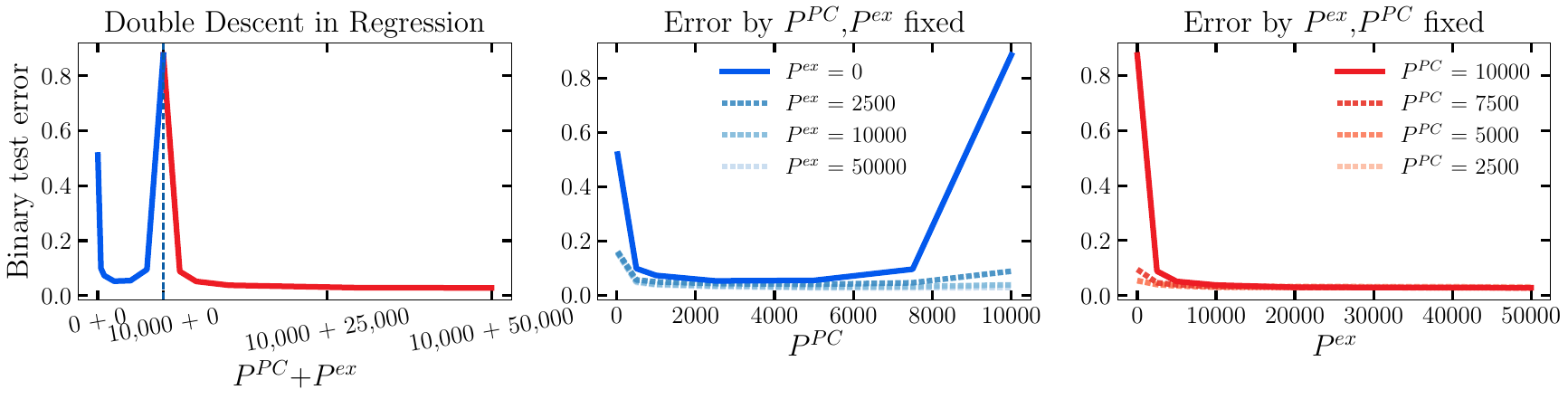}
    \caption{\textbf{Decomposing double descent in binary test error for RFF regression.} \small Reproducing the regression experiments of \cite{belkin2019reconciling} (left). Test error by $P^{PC}$ for fixed $P^{ex}$ (center).  Test error by $P^{ex}$ for fixed $P^{PC}$ (right).}
    \label{fig:binaryreg}
\end{figure}
\newpage
\subsection{Plots of training loss}\label{app:trainloss}
In \cref{fig:traintree,fig:trainboost,fig:trainreg} we provide plots of summed squared training losses associated with the plots of summed squared test loss in presented \cref{fig:trees,fig:gb,fig:RFF_linear_regression} in the main text. 
\begin{figure}[!h]
    \centering
    \includegraphics[width=.9\textwidth]{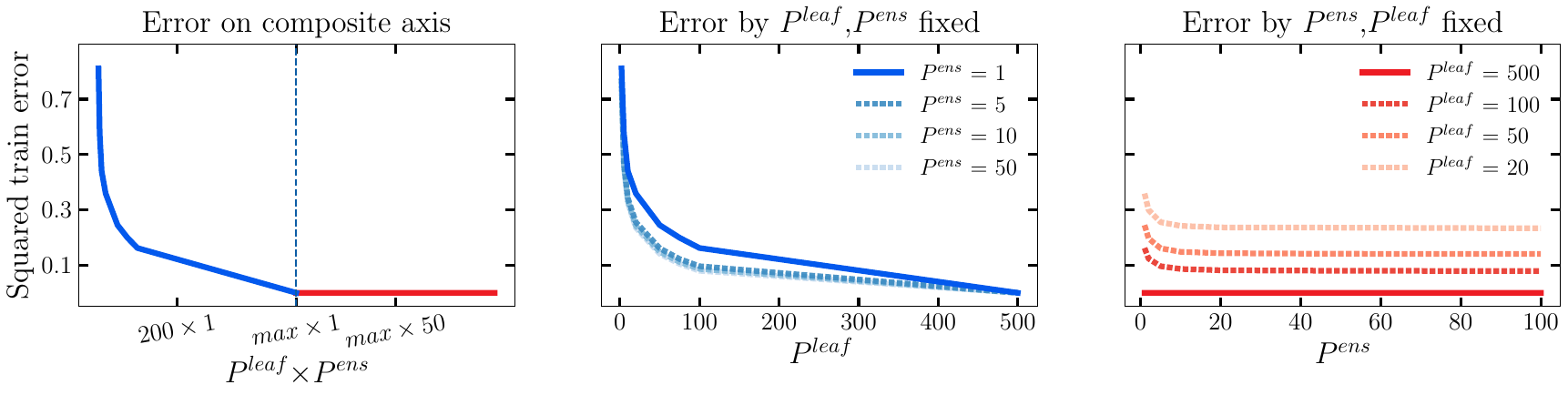}
    \caption{\textbf{Training error in the tree double descent experiments.} \small Training error against \cite{belkin2019reconciling}'s composite axis (left). Train error by $P^{leaf}$ for fixed $P^{ens}$ (center).  Train error by $P^{ens}$ for fixed $P^{leaf}$ (right).}
    \label{fig:traintree}
\end{figure}

\begin{figure}[!h]
    \centering
    \includegraphics[width=.9\textwidth]{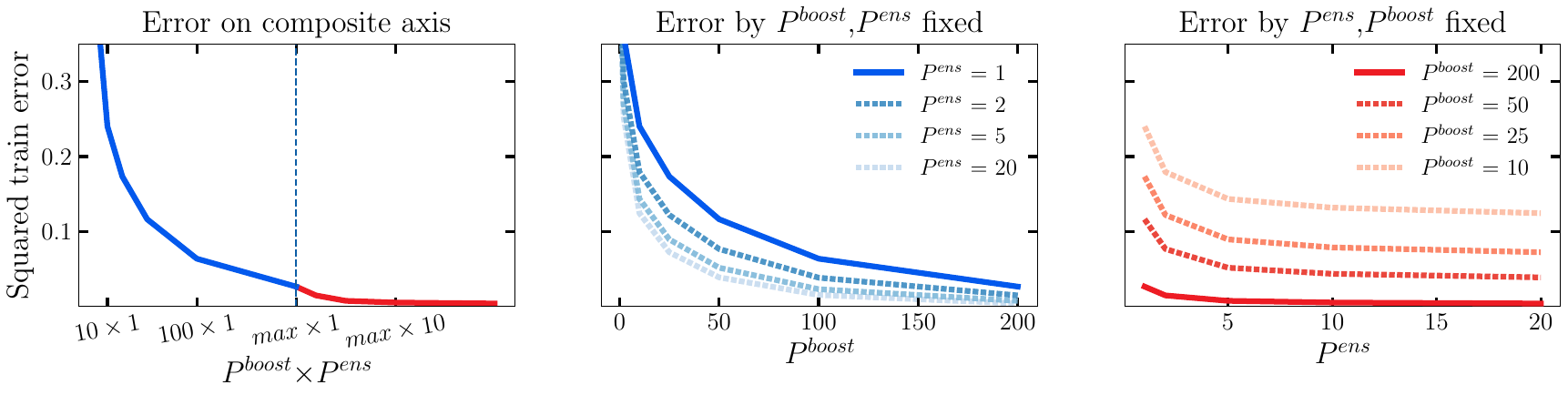}
    \caption{\textbf{Training error in the boosting double descent experiments.} \small Training error against \cite{belkin2019reconciling}'s composite axis (left). Training error by $P^{boost}$ for fixed $P^{ens}$ (center).  Training error by $P^{ens}$ for fixed $P^{boost}$ (right).}
    \label{fig:trainboost}
\end{figure}

\begin{figure}[!h]
    \centering
    \includegraphics[width=.9\textwidth]{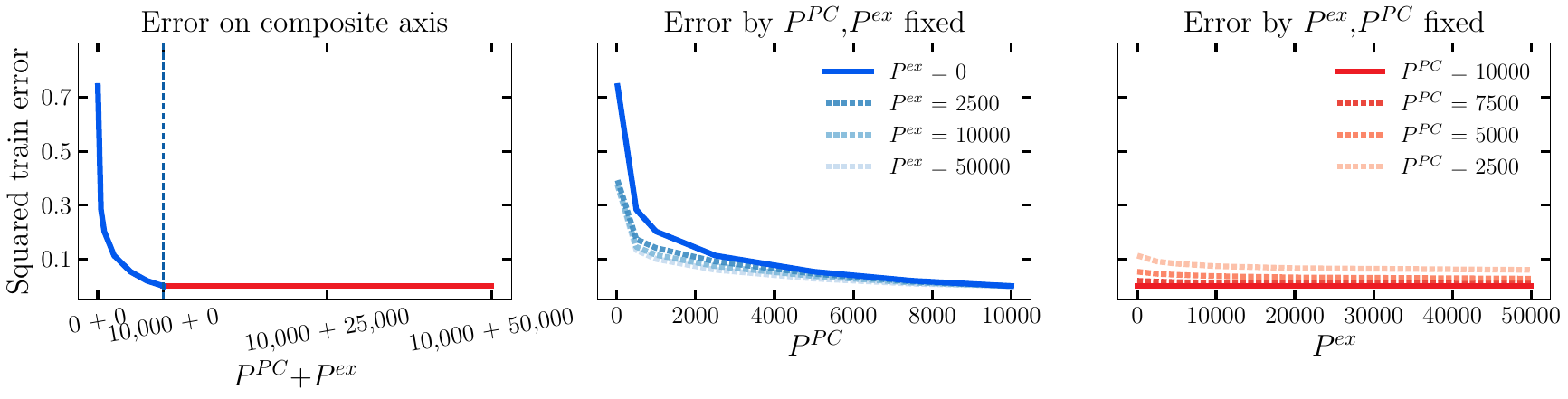}
    \caption{\textbf{Training error in the RFF regression double descent experiments.} \small Training error against \cite{belkin2019reconciling}'s composite axis (left). Training error by $P^{PC}$ for fixed $P^{ex}$ (center).  Training error by $P^{ex}$ for fixed $P^{PC}$ (right). }
    \label{fig:trainreg}
\end{figure}

\newpage
\subsection{Investigating applications of $p^{test}_{\hat{\mathbf{s}}}$ to model selection}\label{app:modelsel}
In this section, we test whether a measure like $p^{test}_{\hat{\mathbf{s}}}$, measuring the effective parameters used on the test inputs, could be used for model selection purposes, possibly providing some route for redemption for parameter count based selection criteria like \cite{akaike1974new, mallows2000some, schwarz1978estimating} that explicitly trade off training error and parameters used (and importantly, do not require access to held-out labels). We are motivated by empirical observations presented in \cref{sec:complexity}. In particular, we note from \cref{fig:eff_p_by_x} that across all three methods, there are models in the interpolation regime that perform very well in terms of generalization -- and they happen to be those with the smallest $p^{\text{test}}_{\hat{\mathbf{s}}}$. This observation indicates that one straightforward model selection strategy when choosing between different hyperparameter settings that all achieve \textit{zero training error} could be to compare models by their $p^{\text{test}}_{\hat{\mathbf{s}}}$.

Here, we test whether it is possible to exploit this in more general settings when there are multiple other hyperparameter axes to choose among. In particular, we consider tuning two other hyperparameters in gradient boosting, tree depth $P^{leaf}\in \{10, 50, 100, 200, 500, \max\}$ and learning rate $\eta \in \{0.02, 0.05, .1, .2, .3, .5, .85\}$. We do so in single boosted models (i.e. $P^{ens}=1$), all trained until squared training error reaches approximately zero, i.e. we terminate after $P^{boost}\in [1, 500]$ rounds, where $P^{boost}$ is chosen as the first round (if any) where squared training error $l_{train}$ drops below $\epsilon = 10^{-4}$ (conversely, whenever $l_{train}>\epsilon$ for all $P^{boost}\leq 500$, we do not consider the associated  ($P^{leaf}, \eta$) configurations an interpolating model). 

\begin{figure}[h]
    \centering
    \includegraphics[width=0.7\textwidth]{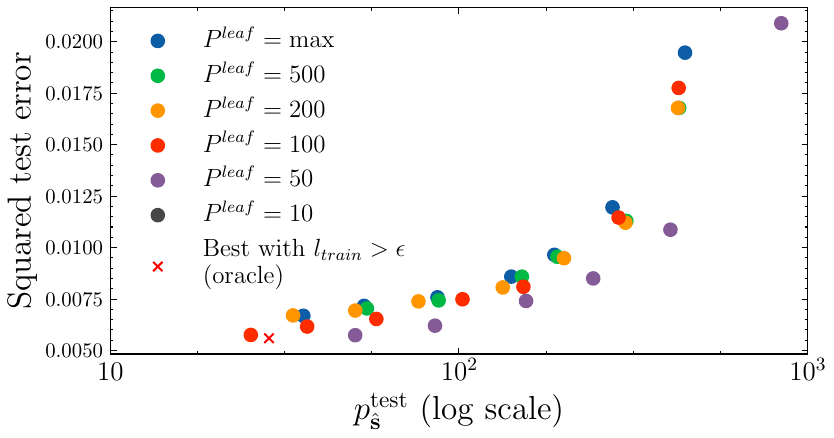}
    \caption{\textbf{Interpolating models with lower $p^{\text{test}}_{\hat{\mathbf{s}}}$ are associated with lower test error.} \small Test error vs $p^{\text{test}}_{\hat{\mathbf{s}}}$ for interpolating gradient-boosting models with of different  $(P^{leaf}, \eta)$-combinations.}
    \label{fig:modelsel}
\end{figure}

In \cref{fig:modelsel} we observe that the intuition indeed carries over to this more general scenario and that interpolating models with lower $p^{\text{test}}_{\hat{\mathbf{s}}}$ are indeed associated with lower test error. In particular, choosing the model with lowest $p^{\text{test}}_{\hat{\mathbf{s}}}$ -- which only requires access to test-time inputs $x$ but no label information $y$ --  would indeed lead to the best test-performance among the interpolating models (with $l_{train}<\epsilon$). Further, using a red cross, we also plot the performance of the best non-interpolating model (with $l_{train}>\epsilon$) among the considered  ($P^{leaf}, \eta$) configurations; this is selected among the models with less boosting rounds $P^{boost}$ by an oracle with access to test-time performance. We observe that this non-interpolating oracle has essentially the same performance as the best interpolating model chosen by $p^{\text{test}}_{\hat{\mathbf{s}}}$. 

\subsection{Results using other datasets}\label{app:otherdata}
Below, we replicate our main experimental results using other datasets considered in \cite{belkin2019reconciling}, SVHN and CIFAR-10, exhibiting the same trends as the experiments on MNIST presented in the main paper.
\subsubsection{SVHN}
\begin{figure}[!h]
    \centering
    \includegraphics[width=.8\textwidth]{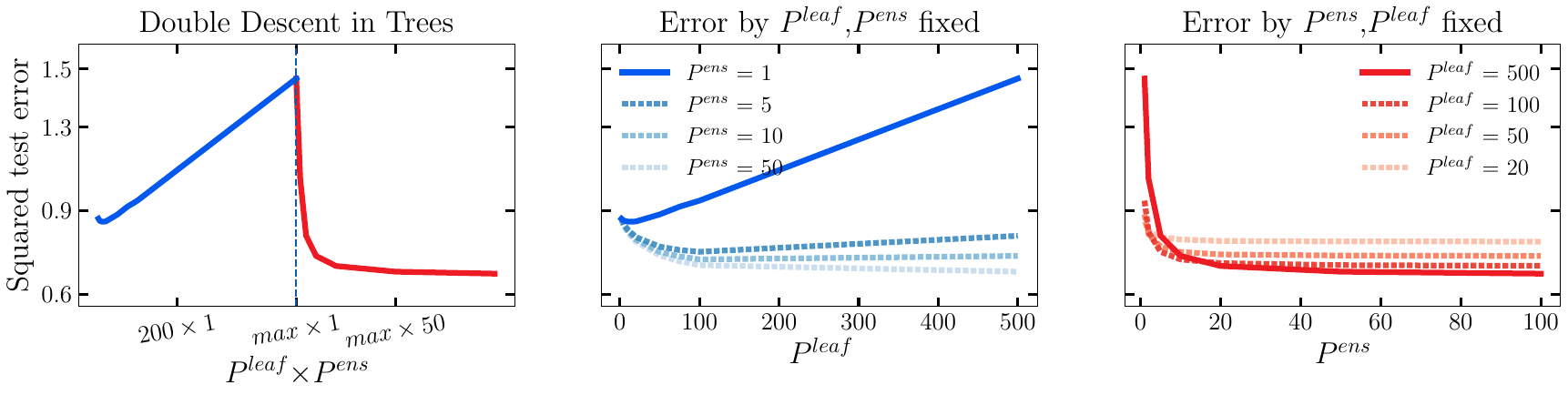}
    \caption{\textbf{Decomposing double descent for trees on the SVHN dataset.} \small Reproducing the tree experiment of\cite{belkin2019reconciling} (left). Test error by $P^{leaf}$ for fixed $P^{ens}$ (center).  Test error by $P^{ens}$ for fixed $P^{leaf}$ (right).}
    \label{fig:treesvhn}
\end{figure}

\begin{figure}[!h]
    \centering
    \includegraphics[width=.8\textwidth]{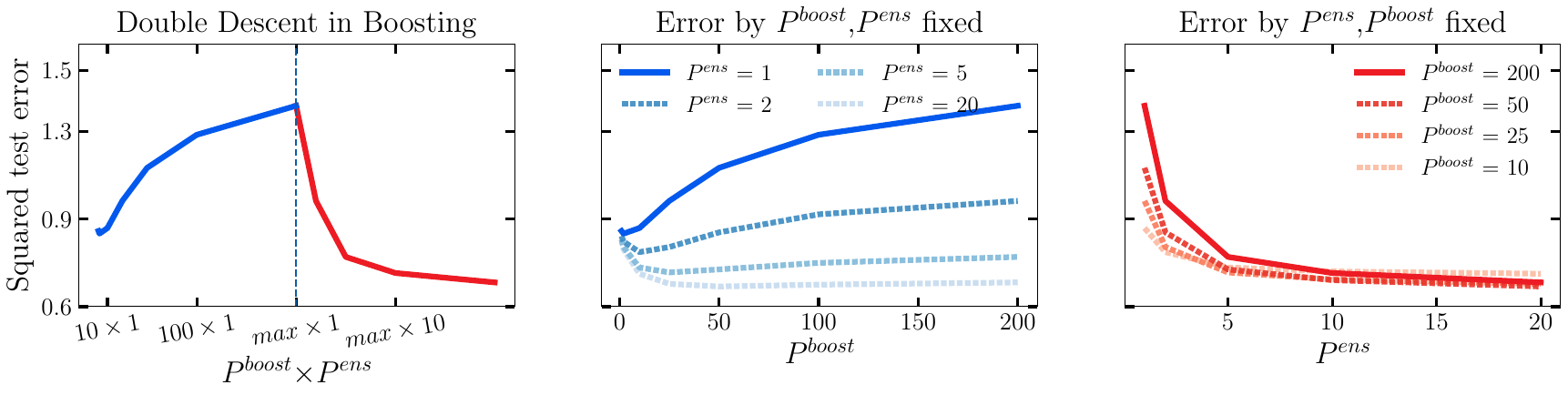}
    \caption{\textbf{Decomposing double descent for gradient boosting on the SVHN dataset.} \small Reproducing the boosting experiments of \cite{belkin2019reconciling} (left). Test error by $P^{boost}$ for fixed $P^{ens}$ (center).  Test error by $P^{ens}$ for fixed $P^{boost}$ (right).}
    \label{fig:boostsvhn}
\end{figure}

\begin{figure}[!h]
    \centering
    \includegraphics[width=.8\textwidth]{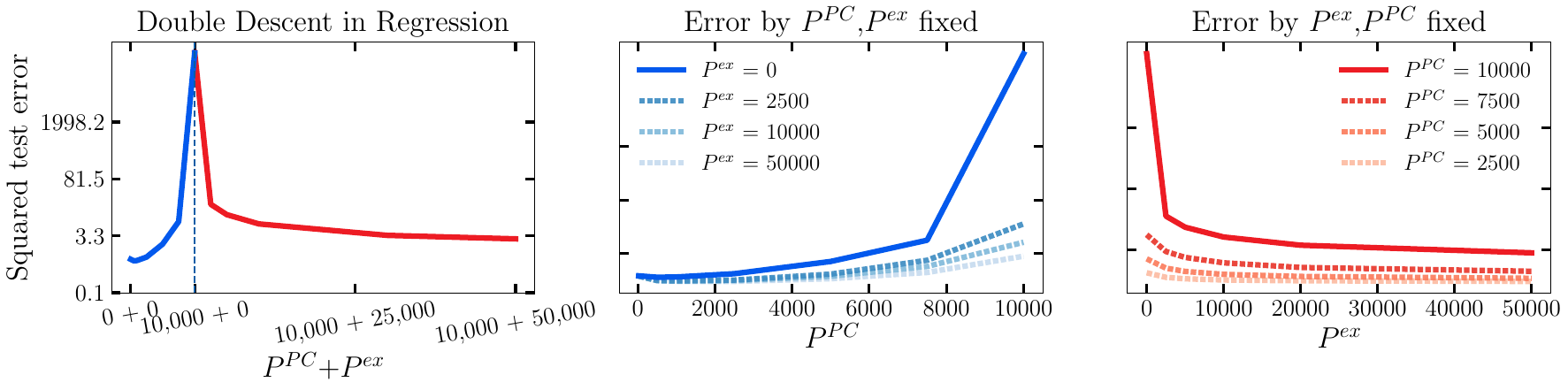}
    \caption{\textbf{Decomposing double descent for RFF Regression on the SVHN dataset.} \small Reproducing the RFF regression experiments of \cite{belkin2019reconciling} (left). Test error by $P^{PC}$ for fixed $P^{ex}$ (center).  Test error by $P^{ex}$ for fixed $P^{PC}$ (right).}
    \label{fig:rffsvhn}
\end{figure}

\begin{figure}[!h]
    \centering
    \includegraphics[width=.8\textwidth]{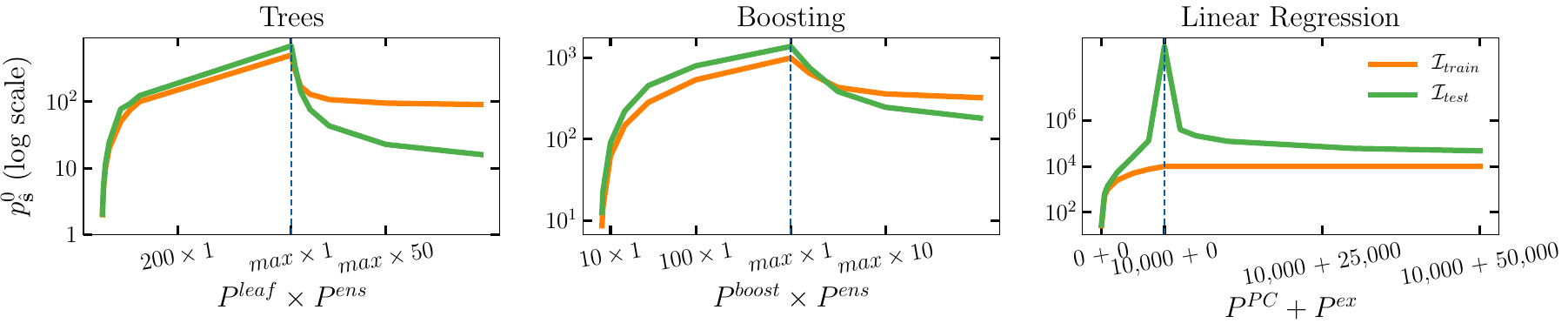}
    \caption{\textbf{The effective number of parameters does not increase past the transition threshold on the SVHN dataset.} \small Plotting $p^{\text{train}}_{\hat{\mathbf{s}}}$ (\textcolor{cborange}{orange}) and $p^{\text{test}}_{\hat{\mathbf{s}}}$ (\textcolor{cbgreen}{green}) for the tree (left), boosting (center) and RFF-linear regression (right) experiments, using the original composite parameter axes of \cite{belkin2019reconciling}.}
    \label{fig:effpsvhn}
\end{figure}
\begin{figure}[!h]
    \centering
    \includegraphics[width=.8\textwidth]{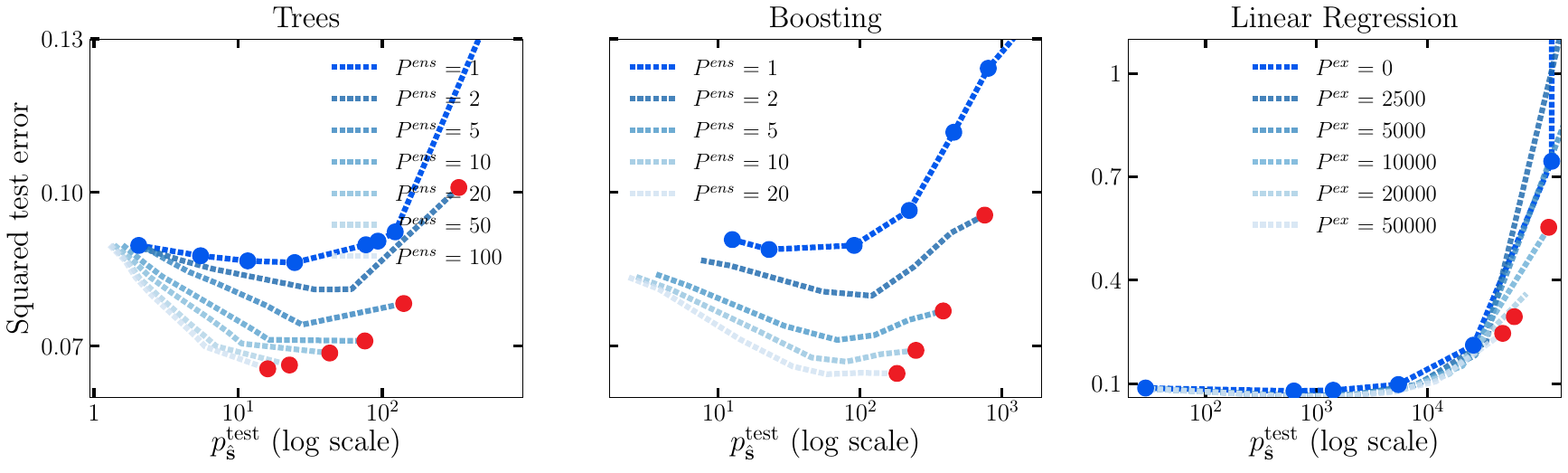}
    \caption{\textbf{Back to U (on the SVHN dataset).} \small Plotting test error against the \textit{effective} number of parameters as measured by $p^{\text{test}}_{\hat{\mathbf{s}}}$. }
    \label{fig:backtousvhn}
\end{figure}
\newpage
\subsubsection{CIFAR-10}
\begin{figure}[!h]
    \centering
    \includegraphics[width=.8\textwidth]{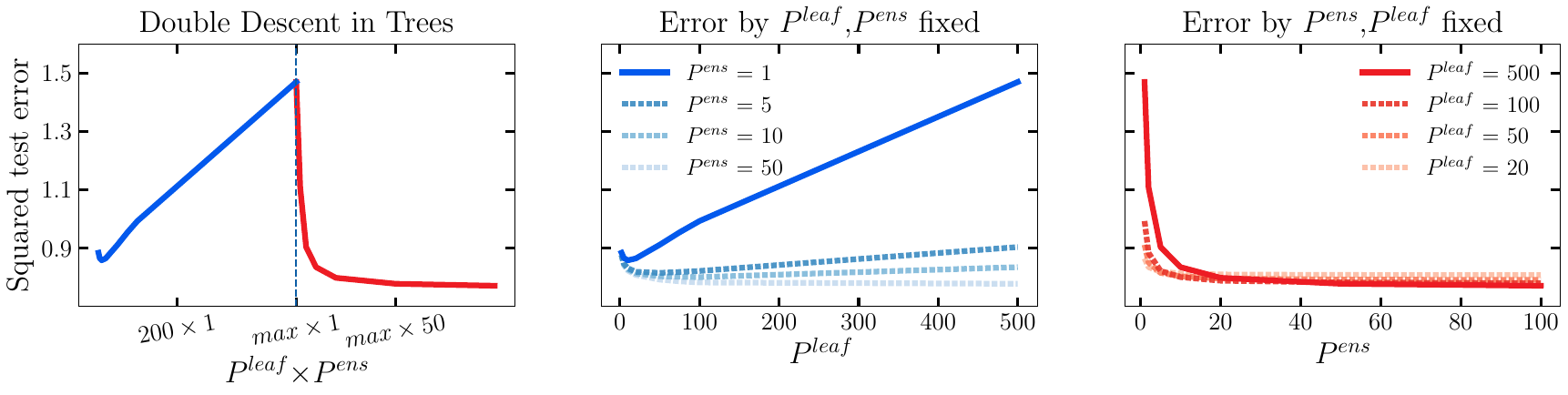}
    \caption{\textbf{Decomposing double descent for trees on the CIFAR-10 dataset.} \small Reproducing the tree experiment of\cite{belkin2019reconciling} (left). Test error by $P^{leaf}$ for fixed $P^{ens}$ (center).  Test error by $P^{ens}$ for fixed $P^{leaf}$ (right).}
    \label{fig:treeCIFAR-10}
\end{figure}

\begin{figure}[!h]
    \centering
    \includegraphics[width=.8\textwidth]{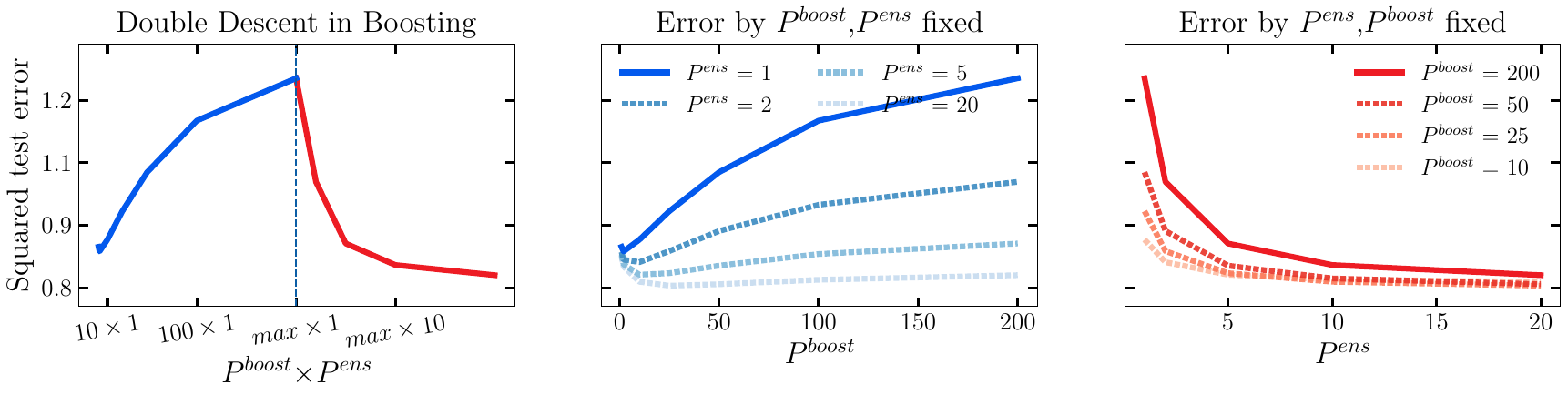}
    \caption{\textbf{Decomposing double descent for gradient boosting on the CIFAR-10 dataset.} \small Reproducing the boosting experiments of \cite{belkin2019reconciling} (left). Test error by $P^{boost}$ for fixed $P^{ens}$ (center).  Test error by $P^{ens}$ for fixed $P^{boost}$ (right).}
    \label{fig:boostCIFAR-10}
\end{figure}

\begin{figure}[!h]
    \centering
    \includegraphics[width=.8\textwidth]{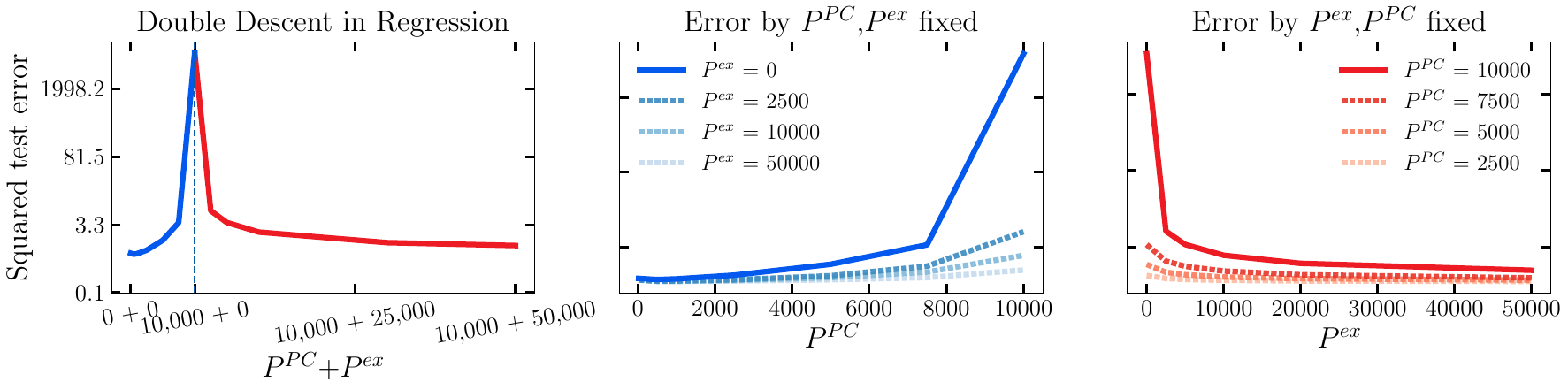}
    \caption{\textbf{Decomposing double descent for RFF Regression on the CIFAR-10 dataset.} \small Reproducing the RFF regression experiments of \cite{belkin2019reconciling} (left). Test error by $P^{PC}$ for fixed $P^{ex}$ (center).  Test error by $P^{ex}$ for fixed $P^{PC}$ (right).}
    \label{fig:rffCIFAR-10}
\end{figure}

\begin{figure}[!h]
    \centering
    \includegraphics[width=.8\textwidth]{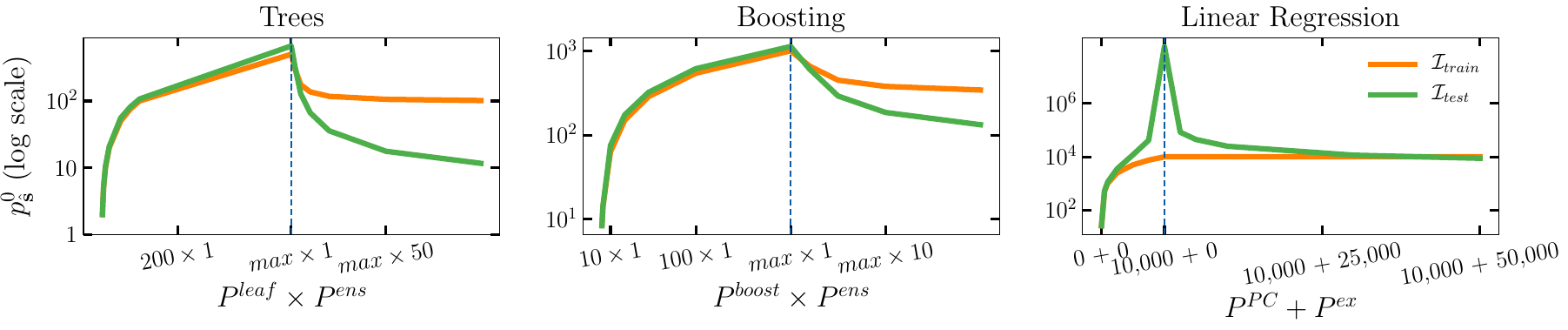}
    \caption{\textbf{The effective number of parameters does not increase past the transition threshold on the CIFAR-10 dataset.} \small Plotting $p^{\text{train}}_{\hat{\mathbf{s}}}$ (\textcolor{cborange}{orange}) and $p^{\text{test}}_{\hat{\mathbf{s}}}$ (\textcolor{cbgreen}{green}) for the tree (left), boosting (center) and RFF-linear regression (right) experiments, using the original composite parameter axes of \cite{belkin2019reconciling}.}
    \label{fig:effpCIFAR-10}
\end{figure}
\begin{figure}[!h]
    \centering
    \includegraphics[width=.8\textwidth]{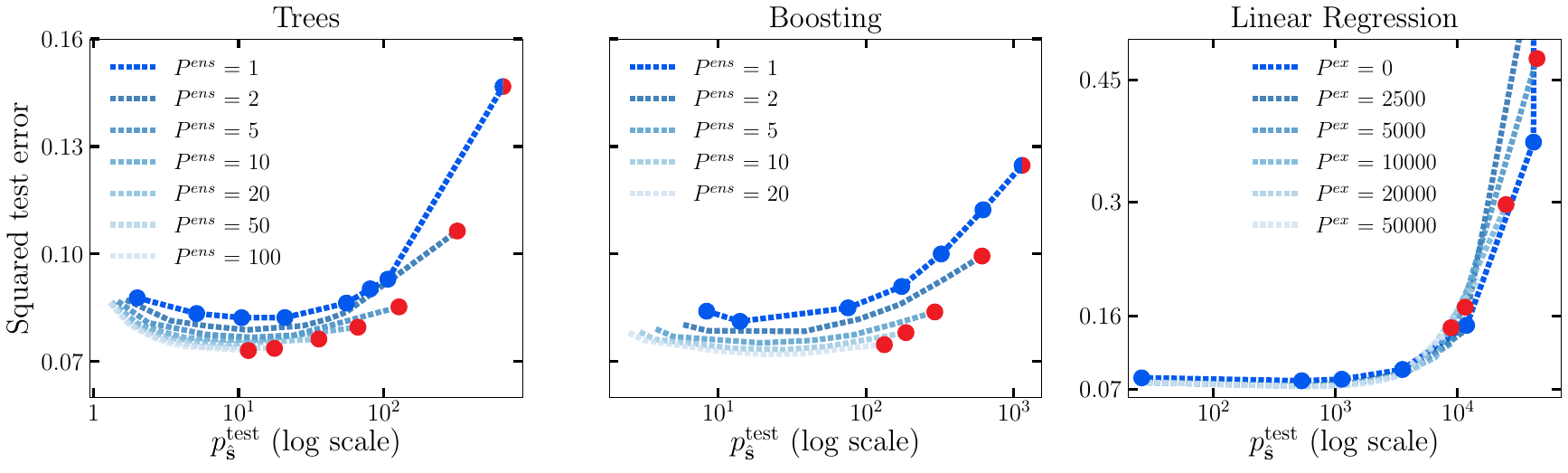}
    \caption{\textbf{Back to U (on the CIFAR-10 dataset).} \small Plotting test error against the \textit{effective} number of parameters as measured by $p^{\text{test}}_{\hat{\mathbf{s}}}$. }
    \label{fig:backtouCIFAR-10}
\end{figure}

\end{document}